\theoremstyle{plain}
\newtheorem{theorem}{Theorem}[section]
\newtheorem{lemma}[theorem]{Lemma}
\theoremstyle{definition}
\newtheorem{definition}[theorem]{Definition}
\theoremstyle{remark}
\icmltitlerunning{MultiAdam: Parameter-wise Scale-invariant Optimizer for Physics-informed Neural Networks}
\begin{document}

\twocolumn[
\icmltitle{MultiAdam: Parameter-wise Scale-invariant Optimizer for \\ Multiscale Training of Physics-informed Neural Networks}



\icmlsetsymbol{equal}{*}

\begin{icmlauthorlist}
\icmlauthor{Jiachen Yao}{equal,thu,zl}
\icmlauthor{Chang Su}{equal,thu,zl}
\icmlauthor{Zhongkai Hao}{thu}
\icmlauthor{Songming Liu}{thu}
\icmlauthor{Hang Su}{thu}
\icmlauthor{Jun Zhu}{thu}
\end{icmlauthorlist}

\icmlaffiliation{thu}{Dept. of Comp. Sci. \& Tech., Institute for AI, BNRist Center, Tsinghua-Bosch Joint ML Center, THBI Lab, Tsinghua University}
\icmlaffiliation{zl}{Zhili College, Tsinghua University}

\icmlcorrespondingauthor{Jun Zhu}{dcszj@tsinghua.edu.cn}

\icmlkeywords{Physics-informed Neural Network, Optimizer, Multitask Learning, Scientific Machine Learning}

\vskip 0.3in
]



\printAffiliationsAndNotice{\icmlEqualContribution} 

\begin{abstract}

Physics-informed Neural Networks (PINNs) have recently achieved remarkable progress in solving Partial Differential Equations (PDEs) in various fields by minimizing a weighted sum of PDE loss and boundary loss.
However, there are several critical challenges in the training of PINNs, including the lack of theoretical frameworks and the imbalance between PDE loss and boundary loss.
In this paper, we present an analysis of second-order non-homogeneous PDEs, which are classified into three categories and applicable to various common problems.
We also characterize the connections between the training loss and actual error, guaranteeing convergence under mild conditions. 
The theoretical analysis inspires us to further propose MultiAdam, a scale-invariant optimizer that leverages gradient momentum to parameter-wisely balance the loss terms.
Extensive experiment results on multiple problems from different physical domains demonstrate that our MultiAdam solver can improve the predictive accuracy by 1-2 orders of magnitude compared with strong baselines.
\end{abstract}

\section{Introduction}
\label{intro}

Partial Differential Equations (PDEs) are important topics in applied mathematics, with a wide range of applications in various fields. The traditional approach to solving PDEs involves utilizing numerical techniques, such as the finite difference methods \cite{grossmann2007numerical} and finite element methods \cite{bathe2006finite}. Nevertheless, numerical methods may generate unrealistic predictions for specific scientific problems, and it is hard for these methods to handle PDEs in high dimensions \cite{zhu2019physics}. Therefore, it has attracted an increasing amount of attention to combine machine learning techniques for solving PDEs. Physics-informed Neural Network (PINN)~\cite{raissi2019physics} is one of the representative approaches that approximate solutions by training neural networks to minimize a weighted sum of PDE loss and boundary loss --- the former is induced from differential equations while the latter is induced from boundary and initial conditions.
PINN has shown its effectiveness in various sophisticated cases, which has been applied in various fields including fluids mechanics \cite{raissi2020hidden, sun2020surrogate}, and bio-engineering \cite{sahli2020physics, kissas2020machine}.

However, the vanilla PINN still suffers from some challenges during training \cite{hao2022physics}. One main challenge is the gap between PINN's loss function and the actual performance, which is often characterized by an absolute error.
In practical scenarios, certain loss terms, such as the PDE loss, might surpass others (e.g., boundary loss) by several orders of magnitude, consequently dominating the training process. This scenario can lead to situations where a reduced training objective—defined as a weighted sum of losses—does not necessarily yield a better approximation of the true solution~\cite{peng2020accelerating}. Our observations suggest that a key factor contributing to this challenge lies in the improper scaling of the PDE's domain. The scale of the domain can significantly affect PDE losses, especially when the PDE is not invariant to scaling—an occurrence that is quite common (see Theorem~\ref{thm:scaling}).
Specificaly, the scaling leads to two concrete issues:
\begin{enumerate}
    \item Due to the imbalance between PDE loss and boundary loss, the conventional optimizers like SGD and Adam might not sufficiently train the PINN model, motivating the development of more effective solvers. 
    \item  Given the observed discrepancy between the PINN's loss function and its actual performance, it becomes crucial to reevaluate the well-posedness of the optimization objective.
\end{enumerate}
To address the first issue, much work has focused on adjusting the relative importance of different loss terms by reweighting. Some works~\cite{wight2020solving, elhamod2022cophy} use manual hyper-parameters to adjust the weights.
However, these non-adaptive methods depend on empirical conclusions, which can lead to sub-optimal results. More research works focus on adaptively balancing PINN losses. For example,  \cite{wang2021understanding} designed a learning rate annealing algorithm using statistics of back-propagated gradients. \cite{wang2022and} proposed another method to adjust weights from the perspective of Neural Tangent Kernel (NTK). In \cite{bai2022physics}, the loss function is modified using the Least Squares Weighted Residual (LSWR) method. Nevertheless, these methodologies primarily concentrate on modifying loss functions, implying that they consider the effect on parameters as a whole. As such, they might overlook the impact of domain scaling on individual parameters of the model.

In this paper, we aim to address the above issues to effectively train PINNs. Specificially, we first present a theoretical error analysis of loss functions for different types of PDEs under mild conditions. This analysis provides a connection between the loss function and the actual performance of the model by bounding the $L^\infty$ error with its PDE loss and boundary loss. This new error boundary not only ensures convergence towards the ground truth under a sufficiently low loss but also serves as an optimization objective as its minimization enables the neural network to approach the true solution more effectively. \cite{wang2022is}'s work supports that $L^\infty$ loss is a better choice that $L^2$ loss.

Building on the upper-bound error, we propose a scale-invariant optimizer, MultiAdam. MultiAdam leverages the observation that the second momentum of Adam acts as an excellent indicator of the gradient scale. We categorize losses of different scales into separate groups, maintaining the second momentum individually for each group. This momentum is subsequently utilized to re-scale the gradients, aligning them to a nearly identical scale. Extensive experiments demonstrate that the MultiAdam optimizer is robust against unbalanced losses and is effective in various complex PDEs across different domain scales. Moreover, MultiAdam exhibits remarkable stability and a high convergence rate under these conditions.

The rest of the paper is organized as follows. In section \ref{related-work}, we briefly review existing variants of PINNs, especially reweighting techniques. In section \ref{section:pre-knowledge}, we go over the original PINN model and Adam optimizer. Section \ref{section:method} introduces the effect of domain scaling on PINN losses using an example of 2D Poisson's equation, followed by an introduction to our new optimizer MultiAdam. Then we provide a theoretical analysis on error bounds for PINNs and show the connection between the existing problem and MultiAdam. 
Section \ref{section:experiments} presents numerical experiments and evaluates MultiAdam using a range of representative benchmark examples. Finally, Section \ref{section:conclusion} encapsulates our findings and contributions.

\section{Related Work}
\label{related-work}


\textbf{Physics-informed Neural Networks} (PINNs) \cite{raissi2019physics} are capable of learning to represent the nonlinear relationship in dynamic systems and providing fast predictions \cite{karniadakis2021physics}. However, theoretical analysis for PINNs is typically insufficient. For some special equations such as Kolmogorov equations and Navier-Stokes equations, the total error can be estimated with regard to the training loss and network settings \cite{de2022error, de2022error2}. A more general result is attained on second-order elliptic equations, where the convergence of PINNs is proved \cite{shin2020error} and the $L^\infty$ error bound is given \cite{peng2020accelerating} under mild constraints. Yet it still remains unclear for many other PDE problems.
Also, analyzing the convergence and accuracy of PINNs is of tremendous challenge, especially for systems with multi-scale characteristics \cite{li2022dynamic}. 
PINNs are commonly optimized by Adam \cite{kingma2014adam} and L-BFGS \cite{liu1989limited}. However, they often reach ill situations when the scale and convergence rate of loss terms vary significantly \cite{hao2022physics}. 


\textbf{Reweighting techniques for PINNs} To correct the imbalance, a standard approach is the introduction of weights in the loss functions \cite{mcclenny2020self}. Currently, several adaptive reweighting methods have been proposed. \cite{wang2021understanding} designed a learning rate annealing algorithm using statistics of back-propagated gradients to mitigate the pathology. Neural Tangent Kernel also provides a novel perspective to adaptively adjust the weights \cite{wang2022and}. In \cite{bai2022physics}, the loss function is modified using the LSWR method to alleviate the biased training issue.


\textbf{Multitask learning methods} The PINN optimization can be regarded as a multitask learning problem since each equation and boundary condition is an individual objective. Therefore, it is also worthwhile to learn from multitask learning (MTL). GradNorm \cite{chen2018gradnorm} and PCGrad \cite{yu2020gradient} are two promising approaches along this line. GradNorm tunes gradient magnitudes based on the average gradient norm and the relative training rate of each task, while PCGrad projects the conflicting gradients onto the normal plane.

\section{Preliminaries}
\label{section:pre-knowledge}

\subsection{Physics-Informed Neural Networks}

The main objective for Physics-Informed Neural Networks (PINNs) is to solve a physical system using known physical laws and available data. Assume the system can be described by the following PDEs:
\begin{align}
\begin{aligned}
\label{formula: pinn_loss_def}
    &f(x;\frac{\partial u}{\partial x_1},\cdots,\frac{\partial u}{\partial x_d};\frac{\partial ^2 u}{\partial ^2 x_1^2}, \frac{\partial^2 u}{\partial x_1\partial x_2},\cdots;\lambda) = 0\\
    &B(u,x) = 0,\forall x\in \partial \Omega
\end{aligned}
\end{align}
where $f$ is the differential equation, $u$ is the solution to that equation, $\Omega$ is the domain and $\partial \Omega$ is the boundary of it. Moreover, $\lambda$ is an additional parameter and $B$ is the boundary condition.

To solve the physical system, PINNs use neural networks to approximate the solution of PDEs. In order to train a neural network meeting all the constraints in Eq. (\ref{formula: pinn_loss_def}), PINNs transform the equations into loss functions defined as follows:
\begin{align}\label{equ:pinn-loss}
\begin{aligned}
L_f(\theta, \lambda; T_f) &= \frac 1{|T_f|} \sum _{x\in T_f} \| f(x,\frac{\partial \hat u_\theta}{\partial x_1}, \cdots;\frac{\partial^2 \hat u_\theta}{\partial x_1^2},\cdots;\lambda)\|_2^2\\
L_b(\theta, \lambda; T_b) &= \frac 1{|T_b|} \sum _{x\in T_b} \| B(\hat u_\theta,x) \|_2^2
\end{aligned}
\end{align}
where $L_f$ is the residual loss for the PDE, and $L_b$ is the loss for boundary condition.
$u_\theta$ is the prediction by neural network with parameter $\theta$ and $T_f,T_b$ are sampling points.

The overall training objective of PINN is then defined as a weighted sum of the two losses:

\begin{align}\label{equ:pinn-loss-sum}
\begin{aligned}
    L(\theta, \lambda; T) = \ &w_f L_f (\theta, \lambda; T_f) + w_b L_b (\theta, \lambda ;T_b )
\end{aligned}
\end{align}

where $w_f, w_b$ are the non-negative weights for different losses. To effectively train a PINN, we have to optimize the two loss terms at the same time and make every loss as low as possible. Therefore, it is natural to treat it as a multitask learning problem.

\subsection{Adam Optimizer}

The Adaptive Momentum Estimation (Adam), as proposed by \cite{kingma2014adam}, is a commonly adopted optimization method for PINNs. It maintains the moving average of the squared gradient, known as the second momentum, to adjust the learning rate for each parameter. The specifics of this algorithm can be seen in Algorithm \ref{alg:ad}. Despite Adam's robust capability to minimize a single loss function for neural networks, it may struggle with handling multiple optimization objectives. Consequently, the network may fail to converge if the weights in Eq.~(\ref{equ:pinn-loss-sum}) are not appropriately configured. A detailed discussion on this matter will be provided in Section \ref{effect-scaling}.

\begin{algorithm}
\caption{Adam}\label{alg:ad}
\begin{algorithmic}[1]
	\REQUIRE learning rate $\gamma$, betas $\beta_1,\beta_2$, max epoch $M$, objective function $f(\theta)$
	\FORALL{$t=1$ to $M$}
	        \STATE $g_{t} \leftarrow \nabla _\theta f(\theta_{t-1})$
	        \STATE $m_{t} \leftarrow \beta_1 m_{t-1} + (1-\beta_1)g_t$
	        \STATE $v_{t} \leftarrow \beta_2 v_{t-1} + (1-\beta_2)g_t^2 $
	        \STATE $\hat m_{t} \leftarrow m_{t} / (1-\beta_1^t)$
	        \STATE $\hat v_{t} \leftarrow v_{t} / (1-\beta_2^t)$
	    \STATE $\theta_t \leftarrow \theta_{t-1} - \gamma \hat m_{t,i} / (\sqrt{\hat v_{t,i}} + \varepsilon)$
	\ENDFOR
	\STATE \textbf{return} $\theta_t$
\end{algorithmic}
\end{algorithm}

\section{Method}
\label{section:method}

We now present our method in detail, starting with an analysis of the imbalance between the terms in the loss objective.

\subsection{The effect of domain scaling on loss balancing}
\label{effect-scaling}

We first observe that the PDE loss and boundary loss may be several orders of magnitude away in real cases, leading to a failure to approach the correct solution by the standard Adam optimizer.
One of the main reasons for the issue is the improper scaling of the domain. Most PDEs are not scaling invariant, which causes the change in domain to rescale PDE loss. The influence is characterized by the following theorem:
\begin{theorem}[Effect of scaling for homogeneous PDEs, Proof in Appendix \ref{pf:scaling}]
\label{thm:scaling}
Suppose $\Omega$ is the domain of a homogeneous PDE of $k$ order and $L^2$ loss is used for PINNs. Then, if we narrow the domain by $t$ times, the boundary loss will stay fixed while the PDE loss will be multiplied by $t^{2k}$.

\end{theorem}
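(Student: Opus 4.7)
The plan is to exploit a simple change of variables that identifies predictions on the original domain with predictions on the narrowed domain, and then to track how the residual of the differential operator transforms. Concretely, let $y = x/t$, so that the narrowed domain is $\Omega' = \{x/t : x\in\Omega\}$. For any candidate prediction $\hat u_\theta$ on $\Omega$ I define its counterpart on $\Omega'$ by $\hat u'_\theta(y) := \hat u_\theta(ty)$; this is a bijection on the space of smooth predictions and lets me relate the two PINN losses parameter-for-parameter.

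The key step will be the chain rule. Any pure or mixed derivative of order exactly $k$ satisfies $\partial^\alpha_y \hat u'_\theta(y) = t^{|\alpha|}\,\partial^\alpha_x \hat u_\theta(ty)$ for $|\alpha|=k$, so every $k$-th order derivative of $\hat u'_\theta$ at $y$ equals $t^k$ times the corresponding derivative of $\hat u_\theta$ at $x = ty$. Because the PDE is homogeneous of order $k$, the residual $f$ is a linear combination of such $k$-th order derivative terms, from which I will read off the pointwise identity $f(\hat u'_\theta, y) = t^k f(\hat u_\theta, ty)$. Squaring and averaging over $\Omega'$, the change of variables $x = ty$ contributes a Jacobian factor $t^d$ that is exactly cancelled by the ratio $|\Omega'| = |\Omega|/t^d$ in the normalizer, leaving a clean overall factor of $t^{2k}$; this is the claimed scaling $L_f' = t^{2k}L_f$.

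For the boundary term the same bijection gives $B(\hat u'_\theta, y) = B(\hat u_\theta, ty)$, since $B$ is a zeroth-order operator evaluating $\hat u$ pointwise and the boundary data is transported along the same map. Squaring and averaging over $\partial\Omega'$, the surface-Jacobian $t^{d-1}$ is again cancelled by $|\partial\Omega'| = |\partial\Omega|/t^{d-1}$, yielding $L_b' = L_b$ as asserted.

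The step I expect to be the main obstacle is pinning down precisely what is meant by ``homogeneous PDE of order $k$'' so that the pointwise identity $f(\hat u'_\theta, y) = t^k f(\hat u_\theta, ty)$ is actually valid. For linear operators whose principal symbol is homogeneous of degree $k$ and which contain no lower-order derivative terms (e.g., the Laplacian), the identity is immediate from the chain rule. But for fully nonlinear operators such as $\det(D^2 u)$, or for PDEs that mix derivatives of different orders, the factor picked up under rescaling is not a single power of $t$, and the clean $t^{2k}$ law breaks. I would therefore state the homogeneity hypothesis carefully at the top of the proof and restrict attention to operators that are linear-homogeneous in the above sense; this already covers the Poisson and second-order elliptic examples used elsewhere in the paper. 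A secondary but similar caveat is that the stated boundary invariance implicitly assumes a zeroth-order (Dirichlet) boundary operator, since a Neumann-type $B$ would itself pick up a factor of $t$; I would flag this explicitly rather than burying it inside the computation.
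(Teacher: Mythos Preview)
Your proposal is correct and follows essentially the same route as the paper: identify the scaled prediction $\hat u'_\theta(y)=\hat u_\theta(ty)$, use the chain rule to see that every order-$k$ derivative picks up a factor $t^k$, apply homogeneity of the operator to get the pointwise residual identity, then square and average. The one minor difference is that the paper works directly with the \emph{discrete} PINN losses of Eq.~(\ref{equ:pinn-loss}), so the sampling-point averages simply transport under $x\mapsto tx$ with no Jacobian bookkeeping at all; your continuous-integral version with the $t^d$ (resp.\ $t^{d-1}$) Jacobian cancelling against $|\Omega'|$ (resp.\ $|\partial\Omega'|$) reaches the same conclusion but is slightly farther from the loss the network actually optimizes. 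Your explicit caveats---that ``homogeneous of order $k$'' must mean a linear combination of pure order-$k$ terms, and that the boundary invariance tacitly assumes a Dirichlet condition---are exactly the implicit hypotheses in the paper's computation and are worth stating up front.
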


We illustrate this with an example of Poisson's equation in a complex domain. The reference solution is depicted in Figure \ref{pic:poisson-case}, with the detailed setup available in Appendix \ref{detail:poisson}. In this case, we condense the original domain, which spans an $8\times 8$ square, by a factor of $8$, resulting in a $1\times 1$ square. As shown in Figure \ref{pic:loss-diff}, when training on the $8\times 8$ domain, the PDE loss and boundary loss do not significantly differ. However, when training on the $1\times 1$ domain, the PDE loss is nearly $8^4$ times larger than the boundary loss. This substantial discrepancy poses considerable challenges in training PINNs, as demonstrated in Figure \ref{pic:poisson-case}.

This example further exposes the gap between the loss function that PINN optimizes and its actual performance. In Figure \ref{pic:loss-error-gap}, we train PINN on the $1\times 1$ domain using two different settings—one incorporating manual reweighting of loss while the other not. In the absence of manual reweighting, PINN fails to approach the ground truth. Yet, its loss is lower than that of the reweighted scenario for the first $10000$ epochs, during which its $L^2$ relative error in relation to the ground truth is significantly higher compared to the reweighted scenario. This suggests that the loss optimized in PINN does not reliably represent the actual performance in this case.

\begin{figure*}[ht]
\vskip 0.2in
\begin{center}
\centerline{\includegraphics[width=2\columnwidth]{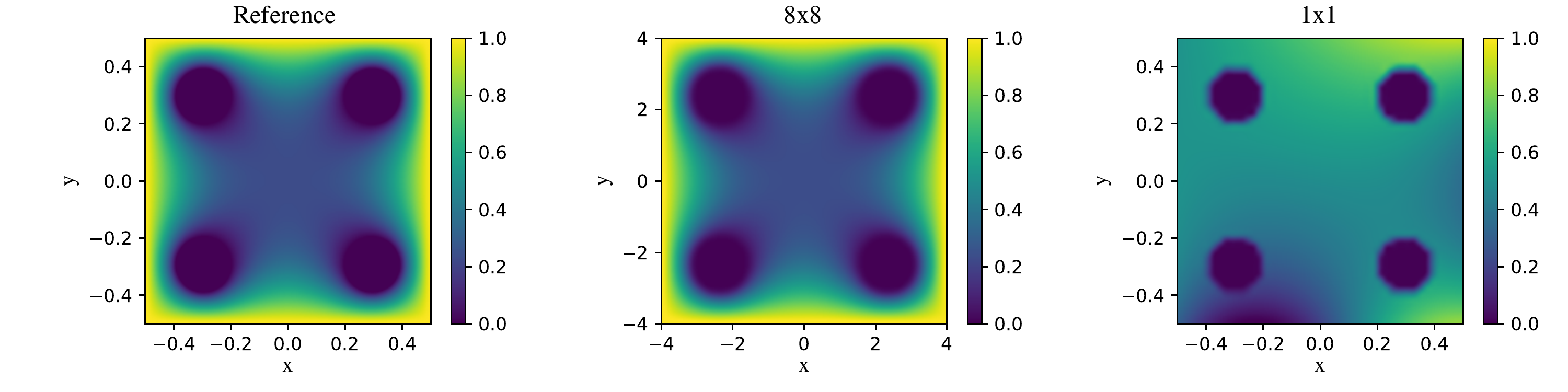}}
\caption{
The left image presents the reference solution for the case. The central image depicts the training result of the baseline PINN on an $8\times 8$ domain, while the right image showcases the same on a $1\times 1$ domain. It is evident that the model encounters difficulties in fitting the boundary condition when trained on the $1\times 1$ domain.
}
\label{pic:poisson-case}
\end{center}
\vskip -0.2in
\end{figure*}

\begin{figure}[ht]
\begin{center}
\centerline{\includegraphics[width=\columnwidth]{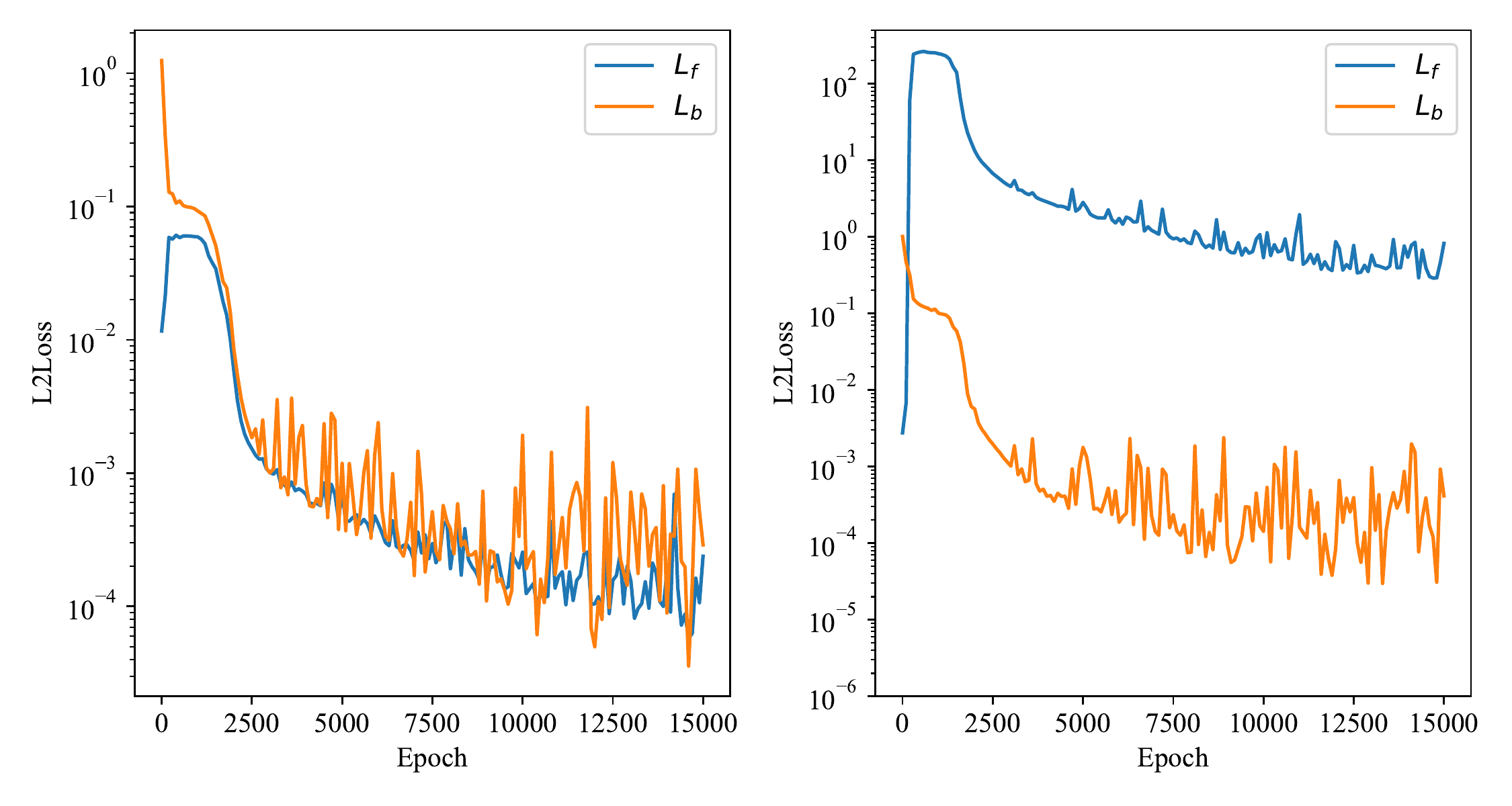}}
\caption{The loss curve of PINNs when solving Poisson equation on the $8\times 8$ domain and the $1\times 1$ domain using Adam optimizer (manually reweighted on $1\times 1$ case). $L_f, L_b$ are defined in equation (\ref{equ:pinn-loss}). While the losses are almost the same on $8\times 8$ case, they differ by several orders of magnitude on the $1\times 1$ case. }
\label{pic:loss-diff}
\end{center}
\vskip -0.2in
\end{figure}

\begin{figure}[ht]
\begin{center}
\centerline{\includegraphics[width=\columnwidth]{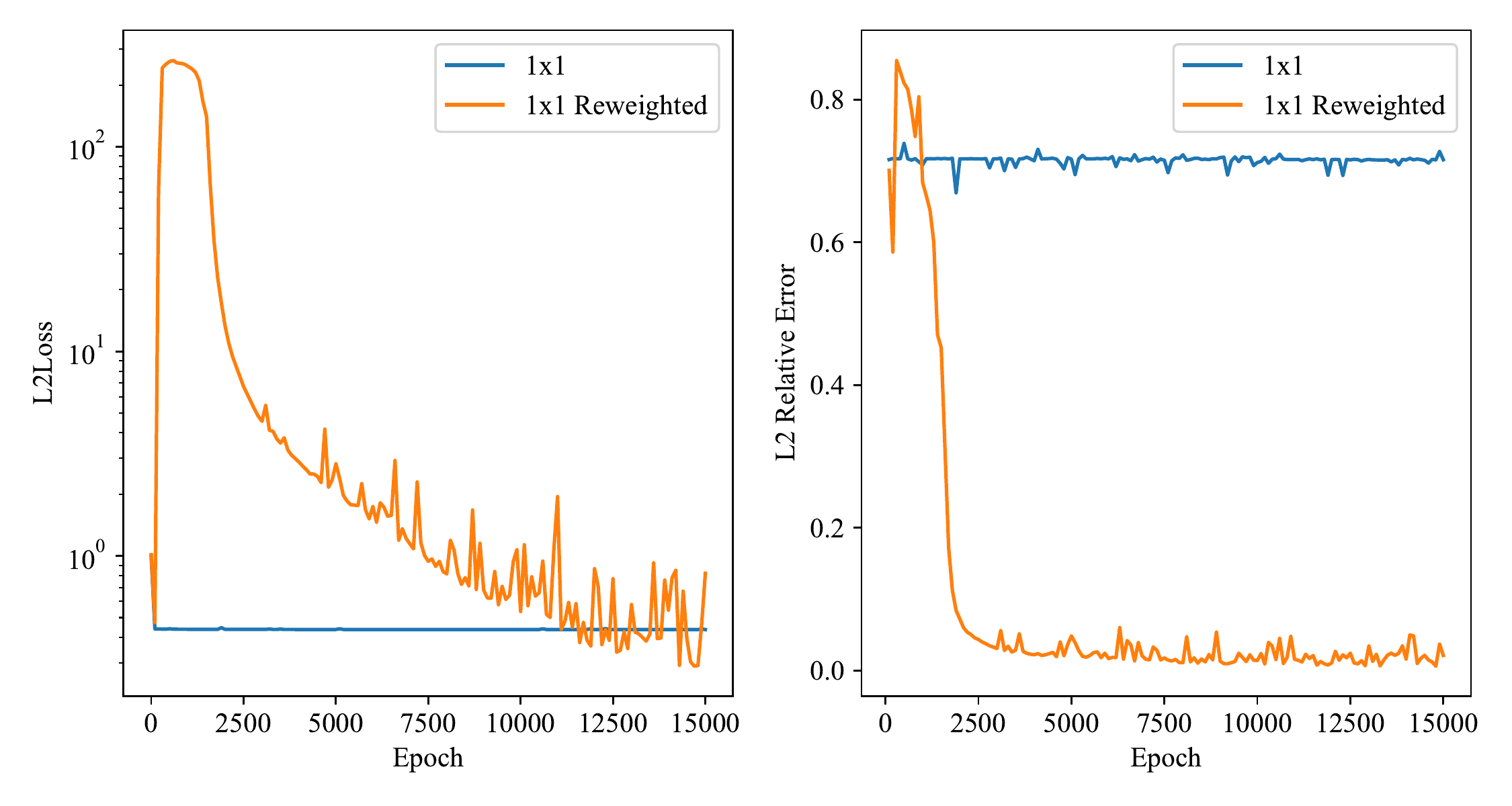}}
\caption{The left figure shows the sum of unweighted loss $L_f + L_b$ during training. The right figure shows the $L^2$ relative error between PINN's prediction and the ground truth. While the loss is lower in the unreweighted case, the prediction is worse off.}
\label{pic:loss-error-gap}
\end{center}
\vskip -0.2in
\end{figure}

\subsection{Error Analysis}

Considering the observed inconsistency between total loss and actual performance, we find it crucial to revisit the well-posedness of our objective function, i.e., we question whether optimization based on the loss indeed leads to improved solutions. We offer a theoretical examination of the relationship between loss and error. Given that the majority of PDEs employed across various disciplines do not exceed second order, and that linear ones are relatively prevalent and simpler to analyze, our study primarily concentrates on elliptic, parabolic, and select hyperbolic equations. These represent the majority of second-order linear PDEs \cite{strauss2007partial}. Based on the error bounds, we establish links between the losses in PINNs and the absolute error of the PINN output.

Specifically, we provide error bounds for three types of PDEs separately in the following theorems. Thanks to Theorem 2.1 and Corollary 2.2 in \cite{peng2020accelerating}, we can directly obtain the proof of Error bounds of PINNs on elliptic PDEs as follows: 
\begin{theorem}[Error bounds of PINNs on elliptic PDEs]
\label{thm:elliptic-1}
Suppose $\Omega \subset \mathbb{R}^d$ is a bounded domain, $\mathcal L$ is an elliptic operator and $\tilde u\in C^0(\overline \Omega)\cap C^2(\Omega)$ is a solution to the following PDE:
\begin{align} \label{equ:theorem-pde}
\begin{aligned}
\mathcal L[u](x) = f(x),\ \  &\forall x\in \Omega\\
u(x) =g(x), \ \ &\forall x\in \partial \Omega
\end{aligned}
\end{align}
If the output $u_\theta$ of the PINN with parameter $\theta$ satisfies:
\begin{align}
\begin{aligned}
&u_\theta \in C^0(\overline \Omega)\cap C^2 (\Omega)\\
&\sup _{x\in \partial \Omega}|u_\theta - \tilde u |<\delta _1\\
&\sup _{x\in \Omega}|\mathcal L[u_\theta ] - f|<\delta_2,
\end{aligned}
\end{align}
then the absolute error over $\Omega$ is upper-bounded:
\begin{equation}
    \sup _{x\in\Omega}|u_\theta -\tilde u|\le \delta_1 + C\delta_2.
\end{equation}
Here, $C$ is a constant depending only the operator $\mathcal{L}$ and the domain $\Omega$. If $\mathop{\mathrm{diam}} \Omega = d$, then $C$ is proportional to $e^{d}-1$ when $\mathop{\mathrm{diam}} \Omega$ changed.
\end{theorem}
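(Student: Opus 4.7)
Since the statement amounts to a quantitative maximum principle for the elliptic operator $\mathcal{L}$, my plan is to reduce the problem to the difference $v := u_\theta - \tilde u$ and control it by an exponential barrier, which is the standard route taken in Peng et al.\ (2020).

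First, I would linearize: setting $v = u_\theta - \tilde u$, the linearity of $\mathcal{L}$ gives $\mathcal{L}[v] = \mathcal{L}[u_\theta] - f$ inside $\Omega$, so the hypotheses translate to $\sup_\Omega |\mathcal{L}[v]| < \delta_2$ and $\sup_{\partial\Omega} |v| < \delta_1$, while the regularity of $u_\theta$ and $\tilde u$ gives $v \in C^0(\overline\Omega)\cap C^2(\Omega)$. It then suffices to prove $\sup_\Omega |v| \le \delta_1 + C\delta_2$ for some $C=C(\mathcal{L},\Omega)$.

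Second, I would build a barrier. Writing $\mathcal{L} = a_{ij}\partial_{ij} + b_i\partial_i + c$ under the usual assumptions (uniform ellipticity with constant $\lambda>0$, bounded coefficients, $c\le 0$), and choosing a coordinate frame so that $\Omega \subset \{0 \le x_1 \le d\}$ with $d = \mathop{\mathrm{diam}}\Omega$, I take $w(x) = e^{\alpha d} - e^{\alpha x_1}$. A short computation gives $\mathcal{L}[w] = -(a_{11}\alpha^2 + b_1\alpha)e^{\alpha x_1} + c(e^{\alpha d}-e^{\alpha x_1})$, so choosing $\alpha$ depending only on $\lambda$ and $\|b_i\|_\infty$ yields $\mathcal{L}[w] \le -1$ in $\Omega$, while $0 \le w \le e^{\alpha d}-1$ on $\overline\Omega$.

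Third, I would close the argument with the weak maximum principle applied to $V^{\pm} = \pm v - \delta_1 - \delta_2 w$. On $\partial\Omega$ we have $V^\pm \le \delta_1 - \delta_1 - 0 = 0$ since $w\ge 0$ and $|v|<\delta_1$. Inside $\Omega$, using $c\le 0$ and $\pm\mathcal{L}[v] \ge -\delta_2$, one checks $\mathcal{L}[V^\pm] = \pm\mathcal{L}[v] - c\delta_1 - \delta_2\mathcal{L}[w] \ge -\delta_2 + 0 + \delta_2 = 0$. The weak maximum principle for operators with $c\le 0$ then forces $V^\pm \le 0$ throughout $\Omega$, i.e.\ $|v| \le \delta_1 + \delta_2 w \le \delta_1 + \delta_2(e^{\alpha d}-1)$, so $C = e^{\alpha d}-1$. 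Since $\alpha$ depends only on the coefficients of $\mathcal{L}$, $C$ scales like $e^{d}-1$ as $\mathop{\mathrm{diam}}\Omega$ varies, matching the claim.

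The main obstacle I expect is that the argument as stated implicitly relies on the standard sign condition $c\le 0$ and on uniform ellipticity with bounded lower-order coefficients; a general elliptic $\mathcal{L}$ may need a preliminary transformation $v\mapsto e^{-\mu x_1}v$ to absorb a positive $c$ into the construction, and one must make sure that the constant $C$ produced this way still depends only on $\mathcal{L}$ and $\Omega$. Once this is handled, the barrier plus maximum-principle comparison is routine and gives the stated bound.
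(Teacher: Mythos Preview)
Your proposal is correct and matches the paper's approach: the paper does not spell out the elliptic case but defers to Theorem~2.1 and Corollary~2.2 of Peng et al.\ (2020), and its own parabolic proof in the appendix follows exactly the barrier-plus-maximum-principle scheme you describe (reduce to $v=u_\theta-\tilde u$, build the exponential comparison $e^{\alpha d}-e^{\alpha x_1}$ on a slab $0\le x_1\le d$, apply the weak maximum principle under $c\le 0$, and absorb $c>0$ by an exponential change of unknown). Your identification of $C=e^{\alpha d}-1$ and of the needed sign/ellipticity hypotheses is precisely what the paper uses.
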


And we further provide the Error bounds of PINNs on Parabolic PDEs and Hyperbolic PDEs in Theorem~\ref{thm:parabolic-2} and~\ref{thm:hyperbolic-2}, respectively. The detailed proof is included in the Appendix. 

\begin{theorem}[Error bounds of PINNs on Parabolic PDEs, proof in Appendix \ref{pf:parabolic-2}]
\label{thm:parabolic-2}
Suppose $\Omega \subset \mathbb{R}^d_x \times \mathbb{R}_t$ is a bounded domain, $\mathcal L$ is an parabolic operator and $\tilde u\in C^0(\overline \Omega)\cap C^2(\Omega)$ is a solution to the PDE in equation \ref{equ:theorem-pde}. If the output $u_\theta$ of the PINN with parameter $\theta$ satisfies:
\begin{align}
\begin{aligned}
&u_\theta \in C^0(\overline \Omega)\cap C^2 (\Omega)\\
&\sup _{x\in \partial \Omega}|u_\theta - \tilde u |<\delta _1\\
&\sup _{x\in \Omega}|\mathcal L[u_\theta ] - f|<\delta_2,
\end{aligned}
\end{align}
then the absolute error over $\Omega$ is upper-bounded:
\begin{equation}
\sup _{x\in\Omega}|u_\theta -\tilde u|\le C_1(\delta_1 + C\delta_2),
\end{equation}
where $C,C_1$ are constants depending only on $\Omega$ and $\mathcal L$. If $\mathop{\mathrm{diam}} \Omega = d$, then $C$ is proportional to $e^{\alpha d} - 1$ when $\mathop{\mathrm{diam}} \Omega$ changed.
\end{theorem}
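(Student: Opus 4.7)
My plan is to adapt the barrier-function argument behind Theorem~\ref{thm:elliptic-1} from the elliptic setting to the parabolic one, swapping out the elliptic maximum principle for the parabolic (weak) maximum principle. First I would set $w := u_\theta - \tilde u$, which lies in $C^0(\overline\Omega)\cap C^2(\Omega)$ and satisfies $|\mathcal L[w]| = |\mathcal L[u_\theta]-f| < \delta_2$ in $\Omega$, together with $|w| < \delta_1$ on the parabolic boundary of $\Omega$. The task is then to bound $\sup_\Omega |w|$ in terms of $\delta_1$ and $\delta_2$.

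The key step is to construct a nonnegative barrier $\psi \in C^2(\overline\Omega)$ with $\mathcal L\psi \ge 1$ in $\Omega$. Writing
\[
\mathcal L = \partial_t - \sum_{i,j}a_{ij}(x,t)\,\partial_{x_i}\partial_{x_j} + \sum_i b_i(x,t)\,\partial_{x_i} + c(x,t),
\]
with uniformly parabolic principal part, I would pick a spatial direction, say $x_1$, and try $\psi(x) = K\bigl(e^{\alpha(x_1 - x_1^0)} - 1\bigr)$ with $x_1^0 := \inf_\Omega x_1$. A direct differentiation shows that for $\alpha$ large enough depending only on $\|a_{ij}\|_\infty$ and $\|b_i\|_\infty$, and for $K$ chosen to dominate any residual lower-order contribution, $\mathcal L\psi \ge 1$ throughout $\Omega$. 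Since $\psi$ depends only on $x_1$ and $|x_1 - x_1^0| \le \mathrm{diam}\,\Omega$, I obtain $\sup_\Omega \psi \le K(e^{\alpha d} - 1)$, which is the source of the stated dependence of $C$ on $d$.

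Applying the parabolic maximum principle to the auxiliary functions $\pm w - \delta_1 - \delta_2\psi$ on $\overline\Omega$ then yields $|w| \le \delta_1 + \delta_2\sup_\Omega\psi \le \delta_1 + C\delta_2$ with $C \propto e^{\alpha d} - 1$. If the zeroth-order coefficient $c$ is not sign-definite, I would perform the change of variables $\tilde w(x,t) := e^{-\lambda t}w(x,t)$ with $\lambda > \sup_\Omega |c|$, reducing the problem to an operator with nonnegative zeroth-order coefficient where the standard parabolic maximum principle applies; undoing the substitution contributes a multiplicative factor $C_1 = e^{\lambda T}$, where $T$ is the time extent of $\Omega$. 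Combining these bounds gives $\sup_\Omega |w| \le C_1(\delta_1 + C\delta_2)$, with $C$ and $C_1$ depending only on $\mathcal L$ and $\Omega$.

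The main obstacle I anticipate is the bookkeeping around the sign of $c$ together with the precise interpretation of $\partial\Omega$: the parabolic maximum principle requires control of $|w|$ on both the initial slice and the lateral boundary, so one must verify that the formulation in equation~\eqref{equ:theorem-pde} indeed supplies bounds on the full parabolic boundary, and one must check that the rescaled barrier $e^{\lambda t}\psi$ continues to satisfy the differential inequality after the $e^{-\lambda t}$ substitution. Once this is handled, the rest of the argument is a straightforward transcription of the elliptic case.
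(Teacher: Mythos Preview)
Your proposal follows essentially the same route as the paper's proof: set $w=u_\theta-\tilde u$, construct an exponential-in-$x_1$ barrier to absorb the interior residual $\delta_2$, apply the parabolic weak maximum principle to $\pm w$ minus the barrier, and then handle the bad sign of the zeroth-order coefficient by the substitution $w\mapsto e^{-\lambda t}w$, which produces exactly the multiplicative factor $C_1=e^{\lambda T}$ that the paper records. One small correction to make when you fill in the details: with your sign convention $\mathcal L=\partial_t-\sum a_{ij}\partial_{x_ix_j}+\cdots$ and $(a_{ij})$ positive definite, the convex barrier $\psi=K(e^{\alpha(x_1-x_1^0)}-1)$ satisfies $\mathcal L\psi\to -\infty$ as $\alpha\to\infty$, not $\mathcal L\psi\ge 1$; you need the concave version $\psi=K\bigl(e^{\alpha d}-e^{\alpha(x_1-x_1^0)}\bigr)$, which is precisely the barrier the paper employs and which still gives $\sup_\Omega\psi\le K(e^{\alpha d}-1)$.
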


\begin{theorem}[Error Bounds for PINNs on Hyperbolic PDEs, proof in Appendix \ref{pf:hyperbolic-2}]
\label{thm:hyperbolic-2}
Suppose $\Omega \subset \mathbb{R}_x\times \mathbb{R}_t^+$ is an admissible domain (defined in Appendix \ref{pf:hyperbolic-2}) and $\mathcal L$ is an hyperbolic operator satisfies the requirements in Appendix \ref{pf:hyperbolic-2}. If the PINN with parameter $\theta$ satisfies that:
\begin{align}
\begin{aligned}
&u_\theta \in C^1(\overline \Omega)\cap C^2 (\Omega)\\
&\sup _{x\in \partial \Omega}|u_\theta - \tilde u |<\delta _1\\
&\sup _{x\in \Omega}|\mathcal L[u_\theta ] - f|<\delta_2
\end{aligned}
\end{align}
Then, we have:
$$
\sup _{x\in \Omega}|u_\theta - \tilde u|\le \delta_1+C\delta_2
$$
where $C$ is constant depending only on $\Omega$ and $\mathcal {L}$. If $\mathop{\mathrm{diam}} \Omega = d$, then $C$ is proportional to $e^{\alpha d} - 1$ when $\mathop{\mathrm{diam}} \Omega$ changed.
\end{theorem}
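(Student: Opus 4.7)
The plan is to mirror the strategy used for the elliptic and parabolic cases (Theorems~\ref{thm:elliptic-1}, \ref{thm:parabolic-2}) by studying the error function $v := u_\theta - \tilde u$. By linearity, $v$ satisfies $\mathcal L[v] = \mathcal L[u_\theta] - f$ in $\Omega$ with $\|\mathcal L[v]\|_{L^\infty(\Omega)} \le \delta_2$ and $\|v\|_{L^\infty(\partial \Omega)} \le \delta_1$. Since the maximum principle is not available for hyperbolic operators, I would replace it by a propagation-along-characteristics argument, which is the natural reason an admissible domain (a set on which every interior point has a full domain of dependence contained in $\overline{\Omega}$, with the non-characteristic portion lying in $\partial \Omega$) is required in the hypothesis.

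First, I would fix an arbitrary point $(x_0,t_0) \in \Omega$ and consider its backward characteristic cone $\mathcal D(x_0,t_0) \subset \overline{\Omega}$. For a hyperbolic operator of the form $\mathcal L[u] = u_{tt} - c^2 u_{xx} + \text{(lower order terms)}$ (or, after diagonalization, a first-order system in the Riemann invariants $w^{\pm}$), one has a d'Alembert/Duhamel representation expressing $v(x_0,t_0)$ as a linear combination of boundary values of $v$ on $\partial \mathcal D \cap \partial \Omega$ plus an integral of $\mathcal L[v]$ and of the lower-order coefficients times $v$ over $\mathcal D$. Applying absolute values and the bounds on $v|_{\partial\Omega}$ and $\mathcal L[v]$ yields an integral inequality of the schematic form
\begin{equation*}
|v(x_0,t_0)| \;\le\; \delta_1 \;+\; \alpha \int_0^{t_0} \sup_{x}|v(x,s)|\,ds \;+\; \beta\, t_0\, \delta_2,
\end{equation*}
where $\alpha$ depends on the $L^\infty$ norm of the lower-order coefficients of $\mathcal L$ and $\beta$ on the geometry.

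Second, I would take the supremum in $x$ and apply the integral (Gr\"onwall) inequality to the function $\Phi(t) := \sup_{x}|v(x,t)|$. A standard Gr\"onwall argument produces $\Phi(t_0) \le \delta_1 e^{\alpha t_0} + \frac{\beta \delta_2}{\alpha}(e^{\alpha t_0}-1)$; absorbing the $\delta_1 e^{\alpha t_0}$ piece is possible only up to a multiplicative constant, but a slightly tighter bookkeeping (treating the $\delta_1$ contribution as a pure boundary term on the base of the characteristic cone, outside the Gr\"onwall loop) gives exactly
\begin{equation*}
|v(x_0,t_0)| \;\le\; \delta_1 + C\,\delta_2, \qquad C = \tfrac{\beta}{\alpha}\bigl(e^{\alpha t_0}-1\bigr),
\end{equation*}
which matches the claim, with $C \propto e^{\alpha d}-1$ when $\mathop{\mathrm{diam}}\Omega = d$. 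Taking the supremum over $(x_0,t_0)\in \Omega$ completes the proof.

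The main obstacle I expect is verifying the representation/integral inequality rigorously for the class of hyperbolic operators allowed by the appendix hypotheses: one needs the admissibility of $\Omega$ to guarantee that every backward cone stays in $\overline{\Omega}$ and intersects $\partial \Omega$ only at the prescribed initial/boundary portion, and one needs $C^1$ regularity of $u_\theta$ (hence the strengthened hypothesis $u_\theta \in C^1(\overline{\Omega})\cap C^2(\Omega)$ compared with the elliptic/parabolic theorems) so that the first-order characteristic integration is justified. A secondary subtlety is the separation of the boundary contribution $\delta_1$ from the Gr\"onwall factor so that it appears without an exponential prefactor in the final bound; this requires tracking $v$ separately on the characteristic base versus in the interior, rather than applying Gr\"onwall na\"ively.
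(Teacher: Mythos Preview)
Your route is genuinely different from the paper's, and one of your premises is actually mistaken. You write that ``the maximum principle is not available for hyperbolic operators'' and therefore replace it by characteristics-plus-Gr\"onwall. In fact the appendix hypotheses (the sign conditions on $c$, $K_\pm$, and $a_{xx}+2b_{xt}+c_{tt}-d_x-e_t+h$) are precisely those of the Protter--Weinberger \emph{hyperbolic} maximum principle, and the paper's proof is a comparison argument built on that lemma. Concretely, the paper constructs the barrier $v=(e^{\alpha t}-1)\sup_\Omega|h_2^-|$ with $\alpha$ chosen so that $\mathcal L[e^{\alpha t}]\le -e^{\alpha t}$, checks that $\mathcal L(h_1-v)\ge 0$ together with the boundary inequalities of the lemma, and concludes $h_1\le v$ directly---no integral inequality, no Gr\"onwall.

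Your Gr\"onwall outline can be made to work, but the ``secondary subtlety'' you flag---keeping $\delta_1$ free of the exponential prefactor---is not minor bookkeeping; it is exactly the step where the na\"ive argument fails. The paper resolves it by a clean decomposition: introduce an auxiliary solution $w$ of $\mathcal L[w]=f$ with boundary data $u_\theta|_{\partial\Omega}$, bound $|u_\theta-w|$ by $(e^{\alpha T}-1)\delta_2$ using the barrier with \emph{zero} boundary mismatch, and then bound $|\tilde u-w|$ by $\delta_1$ using the maximum principle on a function with $\mathcal L[\cdot]=0$ (so no exponential appears). Summing gives $\delta_1+(e^{\alpha T}-1)\delta_2$ exactly. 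If you want to salvage your approach, you should import this same splitting rather than hope the Gr\"onwall loop separates the two contributions on its own.
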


We finally provide how to control the absolute error using PINNs' $L^2$ loss as 
\begin{theorem}[Control Absolute Error using PINNs' $L^2$ Loss, proof in Appendix \ref{pf:L2control}]
\label{thm:L2control}
Suppose the second-order PDE operator $\mathcal{L}$ and the PINN with parameter $\theta$ satisfy that:
\begin{equation}
    \sup _{x\in \Omega}|u_\theta - \tilde u|\le C_1(\sup_{x\in \partial \Omega} |u_\theta - \tilde u|+C \sup_{x\in \Omega} |\mathcal L[u_\theta] - f|)
\end{equation}
where $C, C_1$ are constants. Then, the error can be bounded by $L^2$ loss of the PINN:
\begin{equation}
    \|u_\theta - \tilde u\|_{L_\infty} \le C_2(\sqrt{L_b} + C \sqrt{L_f})
\end{equation}
where $C_2$ is constant depend on $C_1$ and selection of sampling points and base functions (used in proof). The detailed definition is in Appendix \ref{pf:L2control}.
\end{theorem}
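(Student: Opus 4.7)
The plan is to chain together two reductions: first use the given pointwise (sup-norm) bound from the hypothesis, then convert each sup-norm on the right-hand side into the corresponding discrete $L^2$ quantity $\sqrt{L_b}$ or $\sqrt{L_f}$. Starting from the assumption, we already have
\begin{equation*}
\|u_\theta - \tilde u\|_{L^\infty(\Omega)} \;\le\; C_1\bigl(\|u_\theta - \tilde u\|_{L^\infty(\partial\Omega)} + C\,\|\mathcal L[u_\theta] - f\|_{L^\infty(\Omega)}\bigr),
\end{equation*}
so the only remaining task is to dominate each $L^\infty$ norm on the right by the square root of the corresponding averaged squared residual over the sampling sets $T_b$ and $T_f$.

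Next I would introduce a finite-dimensional ansatz space for the residuals. Fix a family of base functions $\{\phi_i\}_{i=1}^N$ (the paper says $C_2$ depends on this choice) such that the boundary residual $r_b(x)=u_\theta(x)-\tilde u(x)$ on $\partial\Omega$, and the interior residual $r_f(x)=\mathcal L[u_\theta](x)-f(x)$ on $\Omega$, are expressible (or well-approximated) as $r_\bullet(x)=\sum_i c_i^{\bullet}\phi_i(x)$. Let $\Phi$ be the $|T|\times N$ evaluation matrix with entries $\phi_i(x_j)$ at the sampling points; by choosing enough well-distributed samples so that $\Phi$ has full column rank, the map $c\mapsto \Phi c$ is injective. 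Then for any such residual $r$,
\begin{equation*}
\|c\|_2 \;\le\; \bigl\|(\Phi^\top \Phi)^{-1}\Phi^\top\bigr\|\cdot \|r|_T\|_2 \;=\; \bigl\|(\Phi^\top \Phi)^{-1}\Phi^\top\bigr\|\sqrt{|T|}\cdot\sqrt{L_\bullet},
\end{equation*}
while $\|r\|_{L^\infty}\le \sum_i |c_i|\,\|\phi_i\|_{L^\infty}\le K\|c\|_2$ for a constant $K$ depending only on the basis. Composing these estimates produces
\begin{equation*}
\|r_b\|_{L^\infty(\partial\Omega)} \le A_b\sqrt{L_b},\qquad \|r_f\|_{L^\infty(\Omega)} \le A_f\sqrt{L_f},
\end{equation*}
with constants $A_b,A_f$ depending only on the chosen basis, the conditioning of $\Phi$, and the sample sizes.

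Finally I would substitute these two inequalities into the hypothesis to get
\begin{equation*}
\|u_\theta - \tilde u\|_{L^\infty(\Omega)} \;\le\; C_1\bigl(A_b\sqrt{L_b} + C\,A_f\sqrt{L_f}\bigr) \;\le\; C_2\bigl(\sqrt{L_b} + C\sqrt{L_f}\bigr),
\end{equation*}
with $C_2 := C_1\max(A_b,A_f)$, which matches the form stated in the theorem and makes explicit the promised dependence of $C_2$ on $C_1$, the base functions, and the sampling points.

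The main obstacle is the second step: passing from a continuous $L^\infty$ norm to a discrete $L^2$ quantity is only legitimate when the residual lies (up to negligible approximation) in a finite-dimensional space on which all norms are equivalent, and when the sample set is unisolvent for that space. This is genuinely a modeling assumption rather than a universal fact, so the clean statement requires declaring the basis and the sampling design up front; once that is done the rest of the argument is essentially linear algebra (bounding the pseudo-inverse of $\Phi$) and does not require any further PDE-specific input.
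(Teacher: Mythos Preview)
Your proposal is correct and follows essentially the same route as the paper: both represent the boundary and interior residuals in a finite-dimensional span of base functions, use norm equivalence on that space (you via the pseudo-inverse of the evaluation matrix $\Phi$, the paper via the positive-definite Gram matrices $A,B$ and a supremum over the unit sphere) to bound each $L^\infty$ residual by the corresponding $\sqrt{L_\bullet}$, and then substitute into the hypothesis to obtain $C_2=C_1\max(A_b,A_f)$. The only cosmetic difference is that the paper carries an explicit multiplicative approximation error $(1+\varepsilon_j(x))$ when expanding in the basis, whereas you absorb this into the phrase ``up to negligible approximation.''
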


Theorem \ref{thm:L2control} delineates the relationship between the loss of PINNs and the actual error. Although the unweighted sum of losses does not directly reflect the performance of PINNs, the introduction of appropriate weights to the losses can ensure a more accurate correspondence to error. This underlines the necessity of reweighting techniques for PINNs. Broadly, the more precise the estimate of $C$, the narrower the gap between the optimization objective and the actual error.

The theorem also illuminates the role of domain scaling. For all three types of PDEs, scaling the domain influences the constants $C$, changing proportionally to $e^{\alpha d} - 1$, where $d = \mathop{\mathrm{diam}} \Omega$ serves as an indicator of scale. This modification in $C$ subsequently affects the optimal weight of the two losses. Therefore, it is imperative for the model to account for the scale of the domain to properly adjust the loss weights.

Motivated by this understanding, we propose our MultiAdam optimizer. It maintains the second momentum of gradients for each group of losses, which is then used to adjust the scale of the update, effectively reweighting all loss terms. We found that gradient-based estimation can approximate the factor $C$, leading to enhanced accuracy.

\subsection{Algorithm}

Inspired by the analysis above, we introduce MultiAdam, a novel optimizer designed to better estimate the relative importance of losses.

Our motivation stems from two key observations. First, the Adam optimizer maintains estimates of both the first and second momentum, and these momentums tend to be relatively stable. Second, the second momentum effectively reflects the inherent difference between the scale of PDE loss and boundary loss. Utilizing the second momentum as weights allows the PDE loss and boundary loss to be normalized to a comparable scale.

The crux of MultiAdam lies in partitioning the PINN loss into several groups. Specifically, we segregate each PDE loss into a separate group, while all boundary losses are grouped together. We maintain the first and second momentum independently for each group, determining the update for every group in a manner akin to Adam. Lastly, we average the updates for each group and apply this as the final update to the network parameters.

The specific algorithm is outlined in Algorithm \ref{alg:ma}. We recommend the hyper-parameter settings as $\gamma=0.001,\beta_1=0.99,\beta_2=0.99$. The rationale behind these choices can be found in Appendix \ref{ablation}.

\begin{algorithm}
\caption{MultiAdam}\label{alg:ma}
\begin{algorithmic}[1]
	\REQUIRE learning rate $\gamma$, betas $\beta_1,\beta_2$, max epoch $M$, objective functions $f_1(\theta), f_2(\theta), \cdots, f_n(\theta)$
	\FORALL{$t=1$ to $M$}
	    \FORALL{$i=1$ to $n$}
	        \STATE $g_{t,i} \leftarrow \nabla _\theta f_i(\theta_{t-1})$
	        \STATE $m_{t,i} \leftarrow \beta_1 m_{t-1,i} + (1-\beta_1)g_{t,i}$
	        \STATE $v_{t,i} \leftarrow \beta_2 v_{t-1,i} + (1-\beta_2)g_{t,i}^2 $
	        \STATE $\hat m_{t,i} \leftarrow m_{t,i} / (1-\beta_1^t)$
	        \STATE $\hat v_{t,i} \leftarrow v_{t,i} / (1-\beta_2^t)$
	    \ENDFOR
	    \STATE $\theta_t \leftarrow \theta_{t-1} - \frac\gamma n \sum _{i=1}^n \hat m_{t,i} / (\sqrt{\hat v_{t,i}} + \varepsilon)$
	\ENDFOR
	\STATE \textbf{return} $\theta_t$
\end{algorithmic}
\end{algorithm}

The reason why we divide every PDE into separate groups is that different PDE has a different intrinsic scaling factor, leading to an imbalance within the same group. Conversely, all Dirichlet boundary losses are grouped together, as they are calculated by measuring the $L^2$ error on sampling points, which remains invariant to the scaling of the domain.

\section{Experiments}
\label{section:experiments}

In this section, we deploy our proposed MultiAdam optimizer on various benchmarks to evaluate its convergence and accuracy. Initially, we consider Poisson's equation, a two-dimensional second-order linear PDE. This serves to examine MultiAdam's efficacy in mitigating the imbalance of weights and achieving convergence. We also compare its weight estimation to the theoretically suggested weight, demonstrating its consistency across diverse domain scales. Subsequently, we apply this method to solve the non-linear elliptic-type Helmholtz equation, underscoring the efficiency of MultiAdam. Lastly, we assess the performance of our method against other techniques in solving time-dependent PDEs, such as the Burgers' equation. An ablation study on the selection of hyper-parameters is presented, which is relegated to Appendix \ref{ablation}.

We compare our method with a few strong baselines: 1) The Adam optimizer utilized by the original PINNs \cite{raissi2019physics} 2) The learning rate annealing (LRA) algorithm for PINNs~\cite{wang2021understanding} and 3) The adaptive weighting from the NTK perspective \cite{wang2022and}. Since PINNs involve the interplay of multiple loss terms from PDE and boundary conditions, some multi-task learning methods may be applied to PINNs. Here, we choose two well-known methods, i.e., 4) GradNorm \cite{chen2018gradnorm} and 5) PCGrad \cite{yu2020gradient}, to compare with.

\subsection{Poisson's equation}
\label{subsec:laplace-equation}
Poisson's equation is a useful elliptic partial differential equation in theoretical physics for calculating electric or gravitational fields \cite{wiki:Poisson's_equation}, taking the form:
\begin{equation}
    \Delta u=f
\end{equation}
In order to show the scale-invariant ability of MultiAdam, we consider two Poisson's systems, Poisson-8 and Poisson-1, which are actually examples presented in Section \ref{effect-scaling}. The Poisson-8 case is as Equation \ref{eqn:poi8} in Appendix \ref{detail:poisson}.
, while the Poisson-1 case just resizes the domain from $[-4,4]^2$ to $[-0.5,0.5]^2$.

As shown in Table \ref{poi-error}, MultiAdam is nearly invariant to the domain scaling and maintains an accurate estimate. For Poisson-8, NTK has the highest precision. However, in the Poisson-1 case, things have changed. Most of the optimizers, other than MultiAdam and NTK, fail to find the solution. MultiAdam performs the best while a significant downgrade (4.17\%) is observed on NTK. Overall, MultiAdam can easily handle the domain-scaling effect and keep good performance on both tests while others cannot.

\begin{table}[t]
\caption{Mean absolute error and relative $L^2$ error of different optimization methods on Poisson's equation. PCGrad runs into NaN due to numerical instability.}
\label{poi-error}
\vskip 0.15in
\begin{center}
\begin{small}
\begin{tabular}{lllll}
\hline
\multirow{2}{*}{Methods} & \multicolumn{2}{c}{Poisson-8} & \multicolumn{2}{c}{Poisson-1} \\
                         & Absolute   & Relative         & Absolute   & Relative         \\ \hline
Adam                     & 7.49E-03   & 2.63\%           & 2.98E-01   & 70.78\%          \\
LRA                      & 1.06E-02   & 4.67\%           & 6.48E-02   & 16.88\%           \\
NTK                      & \textbf{6.58E-03}   & \textbf{1.94\%}           & 2.21E-02   & 6.11\%          \\
GradNorm                 & 8.74E-03   & 2.34\%           & 2.94E-01   & 69.10\%          \\
PCGrad                   & N/A   & N/A  & 3.40E-01   & 77.84\%          \\ 
MultiAdam                & 1.10E-02   & 2.94\%           & \textbf{1.44E-02}   & \textbf{4.49\%}  \\ \hline
\end{tabular}
\end{small}
\end{center}
\vskip -0.1in
\end{table}

\subsubsection{Comparison of weight estimation}

To give a deeper understanding on why MultiAdam outperforms other methods when domain is changed, we compare the weights given by different reweighting algorithms with a theoretically suggested weight as summarized in the following theorem.
\begin{theorem}[Error bound of Poisson's equation, Proof in Appendix \ref{pf:laplaceerror}]
\label{thm:laplaceerror}
Let $\Omega$ be the domain described in section \ref{subsec:laplace-equation}, and $G: \Omega\times \Omega \to \mathbb{R}$ be the Green function of Poisson's equation. Denote $\hat u_\theta$ as the PINN output and $\tilde u$ the reference solution, then we have:
\begin{align}
\begin{aligned}
\|\hat u_\theta - \tilde u\|_{L^1} \le C_1 \sqrt{L_{f}} + C_2\sqrt{L_{b}} ,
\end{aligned}
\end{align}
where $L_{f}, L_{b}$ are losses of PINN and $C_1, C_2$ are constants by the Green function $G(x, \xi)$ as follows:
\begin{align}
\begin{aligned}
C_1 &= \int _{\Omega} \sqrt{|\Omega|\int _{\Omega}G^2(x,\xi) \text d \xi} \text d x\\
C_2 &= \int _{\Omega} \sqrt{|\partial \Omega|\int_{\partial\Omega}(\nabla_{\xi}G(x, \xi) \cdot \mathbf{n})^2} \text d x .
\end{aligned}
\end{align}
\end{theorem}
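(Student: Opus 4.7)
The plan is to derive the bound by combining the classical Green's function representation for the Dirichlet Poisson problem with a single application of the Cauchy--Schwarz inequality, and then turning the resulting continuous $L^2$ norms of the residuals into the discrete PINN losses $L_f$ and $L_b$ via the sampling interpretation used earlier in Theorem~\ref{thm:L2control}.

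First, I would set $e := \hat u_\theta - \tilde u$ and define the residuals $r_f := \Delta \hat u_\theta - f$ on $\Omega$ and $r_b := \hat u_\theta - g$ on $\partial \Omega$. Since $\tilde u$ solves the Poisson problem, subtracting the two equations gives $\Delta e = r_f$ in $\Omega$ with $e = r_b$ on $\partial \Omega$. The Green's function $G$ for the Dirichlet Laplacian on $\Omega$ then yields the representation
\begin{equation*}
e(x) \;=\; \int_{\Omega} G(x,\xi)\, r_f(\xi)\, \mathrm d\xi \;-\; \int_{\partial\Omega} r_b(\xi)\,\bigl(\nabla_\xi G(x,\xi)\cdot \mathbf n\bigr)\, \mathrm dS(\xi).
\end{equation*}
This identity is the cornerstone of the proof: it linearly separates the contribution of the interior residual from that of the boundary residual and exposes exactly the two kernels appearing in $C_1$ and $C_2$.

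Second, I would apply the Cauchy--Schwarz inequality pointwise in $x$ to each of the two integrals, producing
\begin{equation*}
|e(x)| \;\le\; \sqrt{\textstyle\int_\Omega G^2(x,\xi)\,\mathrm d\xi}\,\sqrt{\textstyle\int_\Omega r_f^2\,\mathrm d\xi} \;+\; \sqrt{\textstyle\int_{\partial\Omega}(\nabla_\xi G\cdot\mathbf n)^2\,\mathrm dS}\,\sqrt{\textstyle\int_{\partial\Omega} r_b^2\,\mathrm dS}.
\end{equation*}
At this point the crucial step is to replace the continuous $L^2$ norms of the residuals by the discrete PINN losses. Under the sampling convention of Section~\ref{section:pre-knowledge} one has $\int_\Omega r_f^2\,\mathrm d\xi = |\Omega|\,L_f$ and $\int_{\partial\Omega} r_b^2\,\mathrm dS = |\partial\Omega|\,L_b$ (either exactly, in the idealized quadrature view, or in expectation for uniform Monte Carlo sampling); this is the same identification used in the proof of Theorem~\ref{thm:L2control}. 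Substituting gives $|e(x)| \le \sqrt{L_f}\,\sqrt{|\Omega|\int_\Omega G^2(x,\xi)\,\mathrm d\xi} + \sqrt{L_b}\,\sqrt{|\partial\Omega|\int_{\partial\Omega}(\nabla_\xi G\cdot\mathbf n)^2\,\mathrm dS}$.

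Finally, I would integrate this pointwise inequality over $x\in\Omega$. The left side becomes $\|e\|_{L^1}$ and the right side splits cleanly into exactly the definitions of $C_1\sqrt{L_f}$ and $C_2\sqrt{L_b}$, completing the bound. The main obstacle I anticipate is not the algebra but the justification of the continuous-to-discrete transition: one must either assume that the sampling sets $T_f,T_b$ are dense enough for a quadrature identity to hold, or appeal to a Monte Carlo argument and absorb the concentration constant into $C_1, C_2$. I would address this exactly as Theorem~\ref{thm:L2control} does, invoking the ``selection of sampling points and base functions'' there so that the statement remains consistent with the paper's conventions. The rest is a direct Green-identity computation for which standard regularity of $G$ on the fixed domain $\Omega$ is more than sufficient.
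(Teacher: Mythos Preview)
Your proposal is correct and follows essentially the same route as the paper: Green's representation for the Dirichlet Poisson problem, a pointwise Cauchy--Schwarz on each of the two integrals, identification of the residual $L^2$ norms with $|\Omega|L_f$ and $|\partial\Omega|L_b$, and a final integration in $x$ to produce $C_1,C_2$. The only minor difference is that the paper sidesteps the discrete-to-continuous issue you worry about by simply \emph{defining} $L_f=\tfrac{1}{|\Omega|}\int_\Omega (f+\Delta u_\theta)^2$ and $L_b=\tfrac{1}{|\partial\Omega|}\int_{\partial\Omega}(g-u_\theta)^2$ directly in the proof, rather than invoking a quadrature or Monte Carlo argument.
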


According to the above theorem, the best strategy to minimize $\| v\|_{L^1}$ is to minimize $\sqrt{ C_1^2 L_{f}} + \sqrt{C_2^2L_{b}}$. This implies the assignment of weight $C_1^2$ to the PDE loss and $C_2^2$ to the boundary loss.

Then we run MultiAdam for multiple times and record the norm of second momentum for the PDE loss group and boundary loss group separately. Since we use second momentum to rescale the gradients, its norm reflects how we scale the gradient as a whole. Therefore, in the following comparison, the norm of second momentum is used as our estimated weight for different losses.

For comparison purposes, we also incorporate two other reweighting techniques, LRA and NTK. By normalizing the weight on the boundary loss to 1, we can directly compare the normalized weight on the PDE loss and discern how the algorithms balance between different losses. We run the different methods three times, with the results displayed in Figure \ref{pic:poisson-weight-epoch}.

\begin{figure}[ht]
\vskip 0.2in
\begin{center}
\centerline{\includegraphics[width=\columnwidth]{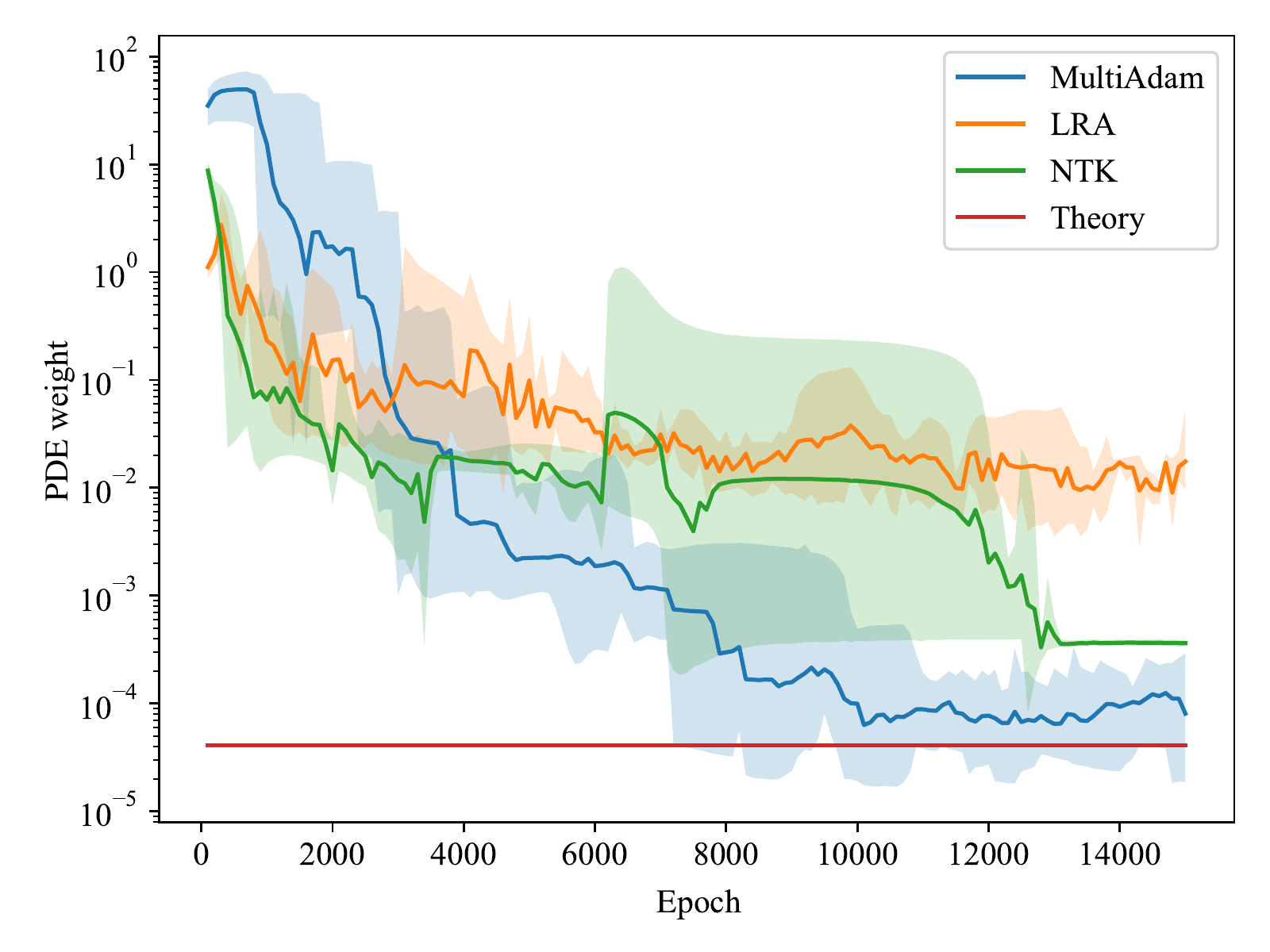}}
\caption{The comparison of normalized weight for PDE loss between MultiAdam, LRA, NTK and theoretical suggestion during training. The domain $\Omega$ lies in $[-0.5, 0.5]^2$. The estimation given by MultiAdam is closest to the theoretical suggestion.}
\label{pic:poisson-weight-epoch}
\end{center}
\vskip -0.2in
\end{figure}

We observe that the weight assigned by MultiAdam closely aligns with the theoretical prediction. This implies that MultiAdam accurately discerns the relative importance of different tasks, enabling it to balance the gradients of various groups and approximate the ground truth closely. It's worth noting that the slightly higher PDE weight, compared to the theoretical estimation, is attributed to the difficulty PINNs face in optimizing the PDE loss.

More crucially, MultiAdam successfully mirrors the growth trend of PDE weight under differnt scales. As depicted in Figure \ref{pic:poisson-weight-scale}, MultiAdam exhibits superior estimation in most scales compared to other methods. 
These results provide a support for MultiAdam's ability to handle problems under different scales.

\begin{figure}[ht]
\begin{center}
\centerline{\includegraphics[width=\columnwidth]{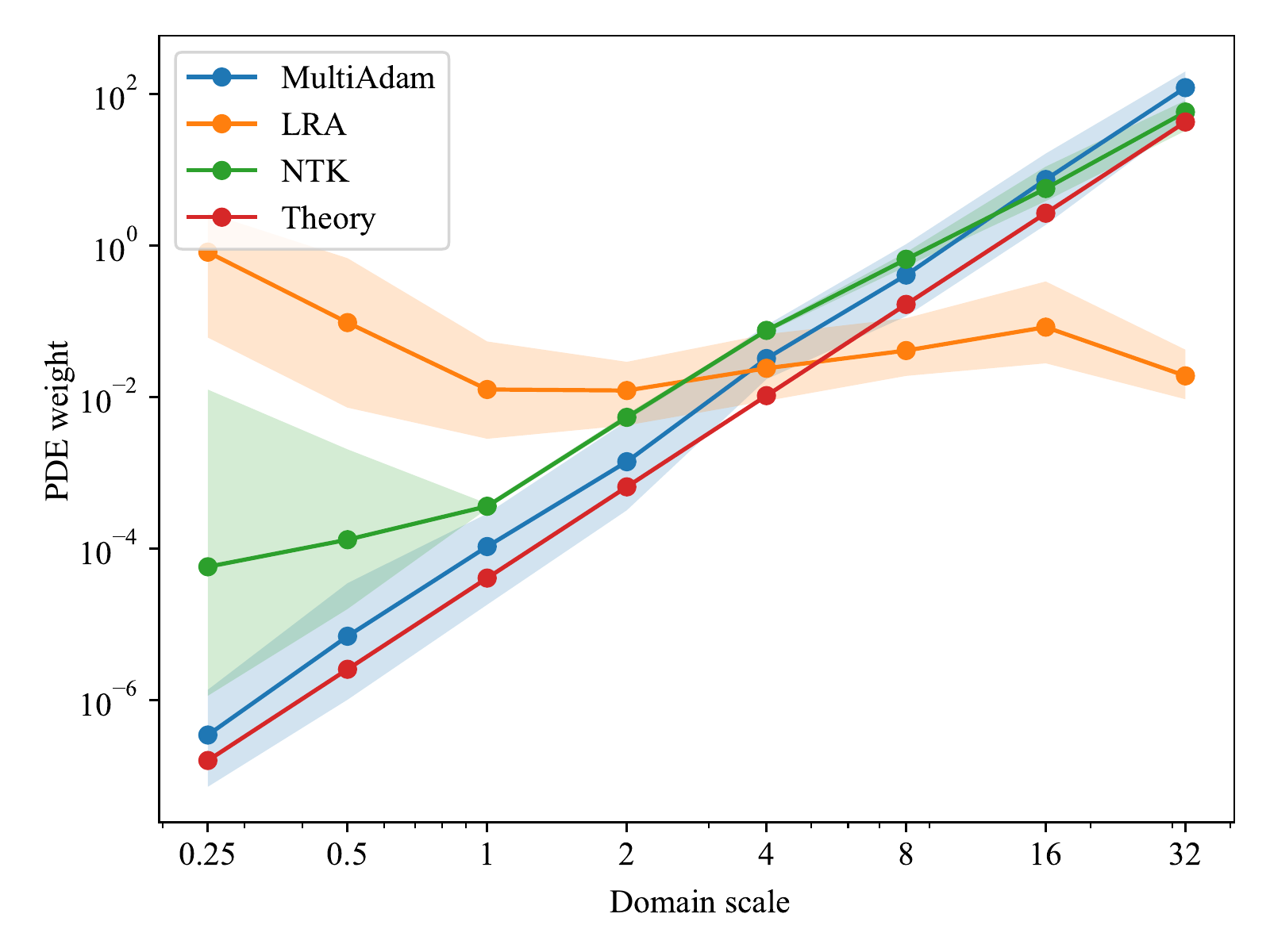}}
\caption{The comparison of normalized weight for PDE loss between MultiAdam, LRA, NTK and theoretical suggestion under different domain scales. When domain scale is $x$, we indicates that the domain $\Omega$ lies in $[-x/2, x/2]^2$}.
\label{pic:poisson-weight-scale}
\end{center}
\vskip -0.2in
\end{figure}

\subsubsection{Gradient pathology}

To further investigate the pathology of imbalanced gradients, we study the distribution of the gradients regarding the PDE residual and the boundary loss. The results are shown in Figure \ref{poi-grad}. We can see that MultiAdam can mitigate the gradient-vanishing problem in PINNs and effectively update parameters. The PDE gradients of the original PINNs are heavily concentrated around zero and barely can parameters be optimized, leading to stagnation. This observation is inline with \cite{wang2021understanding}'s work. By contrast, the PDE gradients of MultiAdam PINNs are more spread, thus more parameters can attain useful information, accelerating the overall optimization.

\begin{figure}[ht]
\vskip 0.2in
\begin{center}
\centerline{\includegraphics[width=\columnwidth]{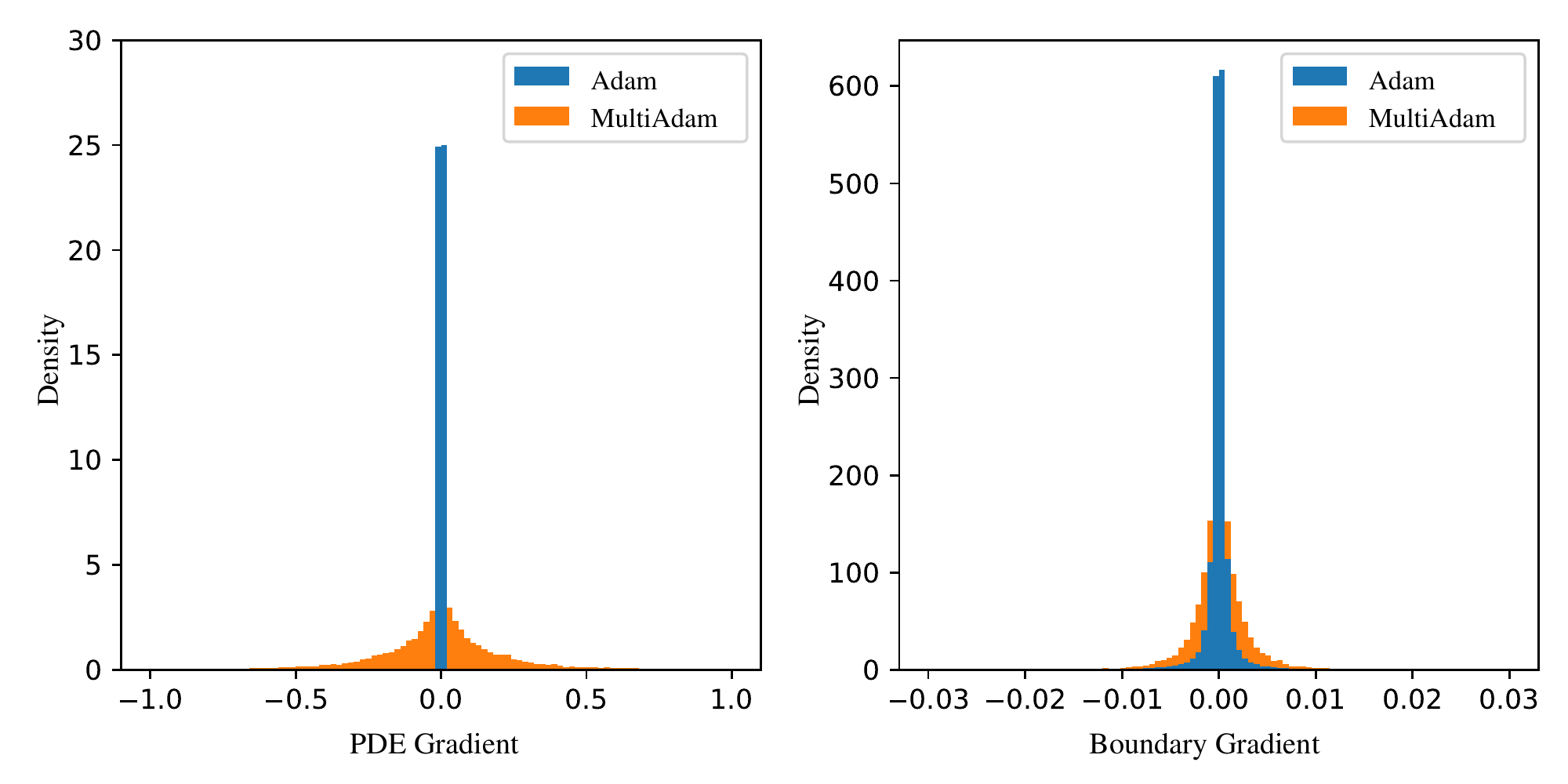}}
\caption{The distribution of the back-propagated gradients over different loss groups (PDE, boundary) at epoch 4000.}
\label{poi-grad}
\end{center}
\vskip -0.2in
\end{figure}

\subsection{Helmholtz equation}

The Helmholtz equation is a non-linear elliptic differential system representing a time-independent form of the wave equation. It appears in various fields of physics, including electromagnetic radiation, seismology, and acoustics \cite{wiki:Helmholtz_equation}. The Helmholtz equation is a good testbed to demonstrate the ability to cope with highly non-linear problems. Specifically, the equation takes the following form:
{
\begin{align}
\begin{aligned}
    & u_{xx}+u_{yy}+k^2u-f=0,~\forall x \in\Omega \\
    & u(x)=0,~\forall x\in \partial\Omega \\
    & \Omega=[-\frac{b}{2},\frac{b}{2}]^2 ,
\end{aligned}
\end{align}
}
where $k$ is a parameter. The initial-boundary value problem has exact solution $u(x,y)=\sin(a_ix)\sin(a_2x)$ when
\begin{equation}
    f(x,y)=(k^2-a_1^2\pi^2-a_2^2\pi^2)\sin(a_1\pi x)\sin(a_2\pi y)
\end{equation}

We consider two cases, $(k=1,a_1=a_2=1,b=1)$ and $(k=1,a_1=a_2=10,b=0.2)$, denoted as Helmholtz-1 and Helmholtz-0.2 respectively. Figure \ref{hel-heatmap} in Appendix \ref{app:hel} presents the reference solution.

From the perspective of both absolute error and relative error in Table \ref{hel-error}, MultiAdam achieves the highest accuracy among these techniques . It improves the relative $L^2$ error by roughly two orders of magnitude. After resizing the domain, MultiAdam does not suffer while the competitors do, which again demonstrates the robustness of our method against re-scaling.

\begin{table}[t]
\caption{Mean absolute error and relative $L^2$ error of different optimization methods on Helmholtz equation.}
\label{hel-error}
\vskip 0.15in
\begin{center}
\begin{small}
\begin{tabular}{lllll}
\hline
\multirow{2}{*}{Methods} & \multicolumn{2}{c}{Helmholtz-1} & \multicolumn{2}{c}{Helmholtz-0.2} \\
                         & Absolute    & Relative          & Absolute     & Relative           \\ \hline
Adam                     & 8.50E-02    & 22.46\%           & 3.45E-01     & 93.46\%               \\
LRA                      & 4.00E-03    & 1.11\%           & 1.65E-01     & 45.87\%              \\
NTK                      & 8.32E-02    & 21.76\%           & 5.05E-01     & \textgreater 100\%              \\
GradNorm                 & 6.15E-02    & 16.06\%           & 3.97E-01     & \textgreater 100\%              \\
PCGrad                   & 1.79E-02    & 4.80\%           & 8.67E-02     & 22.92\%              \\ 
MultiAdam       & \textbf{1.56E-03}    & \textbf{0.43\%}   & \textbf{3.23E-03}     & \textbf{0.87\%}    \\ \hline
\end{tabular}
\end{small}
\end{center}
\vskip -0.1in
\vspace{-1.3em}
\end{table}

\subsubsection{Rate of convergence}

\begin{figure}[ht]
\vskip 0.2in
\begin{center}
\centerline{\includegraphics[width=\columnwidth]{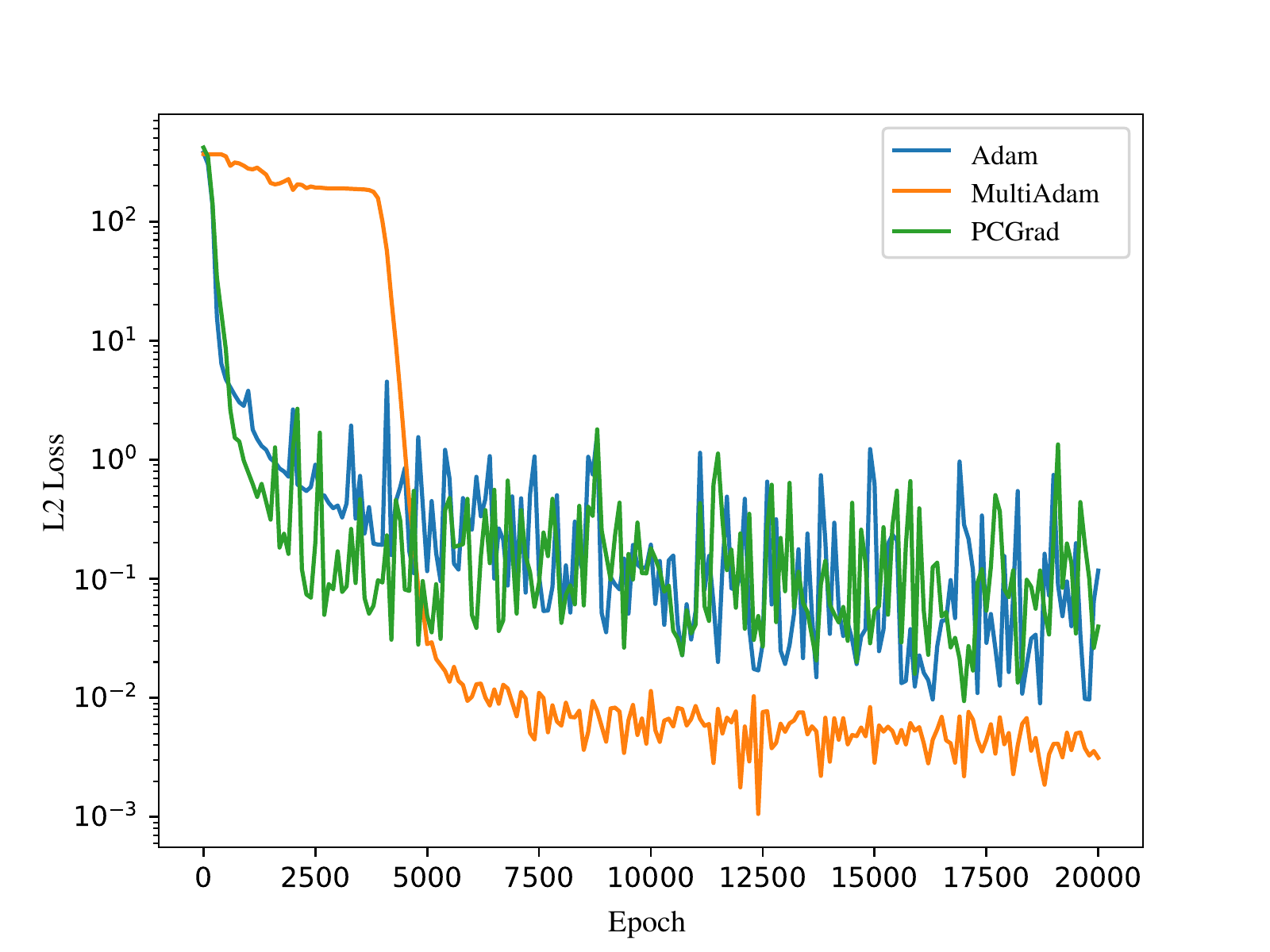}}
\caption{$L^2$ loss curves in the Helmholtz-1 case trained with Adam, MultiAdam or PCGrad.}
\label{hel-rate}
\end{center}
\vskip -0.2in
\end{figure}

We choose three representative algorithms, namely Adam, MultiAdam and PCGrad, to compare their convergence speeds from the perspective of $L^2$ loss curves. As shown in Figure \ref{hel-rate}, it is interesting to see that MultiAdam moves slow in the beginning phase (e.g., $<5000$ epochs), while can quickly converge to better solutions. The reason for this phenomenon is that MultiAdam is estimating the momentum of the PDE and boundary objectives
and once it obtains a good estimate, the super-fast convergence rate is observed. In contrast, the other methods converge slowly with much more unstable phenomenons. These results demonstrate the high efficiency and stability of MultiAdam. 

\subsection{Burgers' equation}

The Burgers' equation is a fundamental PDE that describes the evolution of a velocity field in one spatial dimension, represented as follows:
\begin{align}
\begin{aligned}
    u_t+uu_x-\nu u_{xx}&=0,~\forall x\in [-1,1],t\in [0,1] \\
    u(0,x)&=-\sin(\pi x) \\
    u(t,-1)&=u(t,1)=0 ,
\end{aligned}
\end{align}
where $\nu=\frac{0.01}{\pi}$. It can display parabolic or hyperbolic behaviors depending on the relative importance of the forces present.

Table \ref{burgers-error} show the results. We can see that our method has 2.92\% lower error than the baseline PINNs, yet NTK reweighting is even lower in this case. Comparing with NTK reweighting, MultiAdam is more stable, as illustrated in Figure \ref{berger-mape}, where we present the curves of relative $L^2$ error when using Adam, MultiAdam, and NTK methods. We can see that for MultiAdam the error stays at a relatively low position since the middle of training, while Adam's error periodically rises up to as large as 30\%. The spike phenomenon is not so eminent for NTK reweighting, but it is still remarkably worse than MultiAdam's.

\begin{table}[t]
\caption{Mean absolute error and relative $L^2$ error of different optimization methods on Burgers' equation.}
\label{burgers-error}
\vskip 0.15in
\begin{center}
\begin{small}
\begin{tabular}{lll}
\hline
\multirow{2}{*}{Methods} & \multicolumn{2}{c}{Burgers-1} \\
                         & Absolute   & Relative         \\ \hline
Adam                     & 1.61E-02   & 5.87\%           \\
LRA                      & 8.23E-03   & 2.71\%           \\
NTK                      & \textbf{3.47E-03}   & \textbf{1.24\%}           \\
GradNorm                 & 4.81E-03   & 1.51\%           \\
PCGrad                   & 6.18E-02   & 15.96\%  \\ 
MultiAdam                & 5.45E-03   & 2.95\%           \\ \hline
\end{tabular}
\end{small}
\end{center}
\vskip -0.1in
\end{table}

\begin{figure}[ht]
\begin{center}
\centerline{\includegraphics[width=\columnwidth]{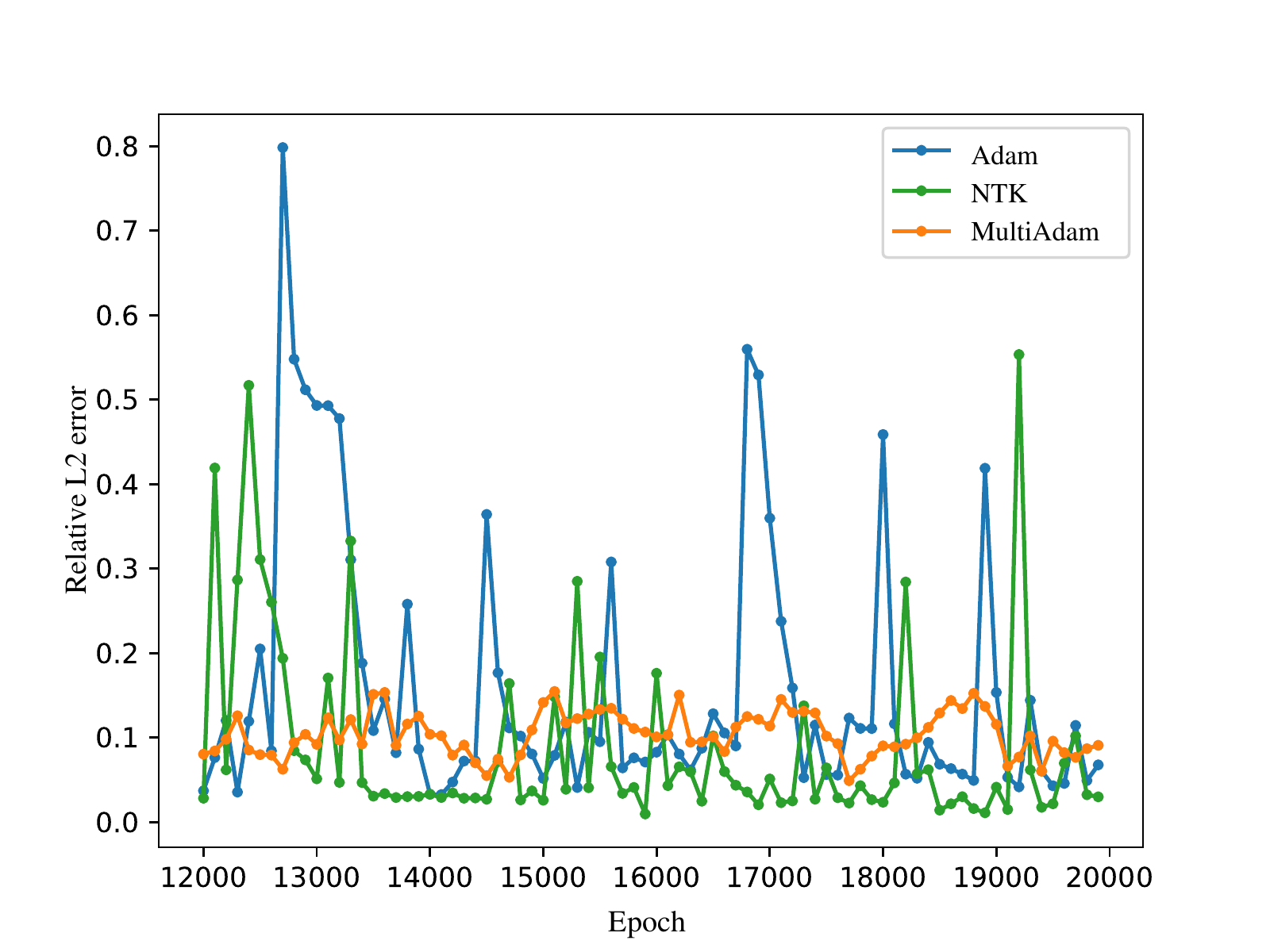}}
\caption{The maximum error across five runs during the last 8000 iterations of training on the Burgers' equation.}
\label{berger-mape}
\end{center}
\vskip -0.2in
\end{figure}

\section{Conclusion}
\label{section:conclusion}

This study primarily aimed to develop a scale-invariant approach for training Physics-Informed Neural Networks (PINNs). We highlighted the impact of domain scaling on PDE loss terms, which significantly contributes to unbalanced losses, and discussed its negative effect on PINN training. To address this issue, we introduced MultiAdam, a parameter-wise scale-invariant optimizer specifically designed for training PINNs. Our numerical experiments demonstrated that this optimizer is capable of handling a variety of cases across different scales, offering a relatively stable training process. 
At the same time, we provided a theoretical analysis of the error bounds of PINNs, which characterize the relationship between the PINN loss terms and the actual performance. 

\section*{Acknowledgements}

This work was supported by the NSF of China Projects (Nos. 62061136001, 61620106010, 62076145, U19B2034, U1811461, U19A2081, 6197222, 62106120, 62076145); a grant from Tsinghua Institute for Guo Qiang; the High Performance Computing Center, Tsinghua University. J.Z was also supported by the New Cornerstone Science Foundation through the XPLORER PRIZE.

\bibliography{reference}

\begin{thebibliography}{34}
\providecommand{\natexlab}[1]{#1}
\providecommand{\url}[1]{\texttt{#1}}
\expandafter\ifx\csname urlstyle\endcsname\relax
  \providecommand{\doi}[1]{doi: #1}\else
  \providecommand{\doi}{doi: \begingroup \urlstyle{rm}\Url}\fi

\bibitem[Bai et~al.(2022)Bai, Rabczuk, Gupta, Alzubaidi, and
  Gu]{bai2022physics}
Bai, J., Rabczuk, T., Gupta, A., Alzubaidi, L., and Gu, Y.
\newblock A physics-informed neural network technique based on a modified loss
  function for computational 2d and 3d solid mechanics.
\newblock \emph{Computational Mechanics}, pp.\  1--20, 2022.

\bibitem[Bathe(2006)]{bathe2006finite}
Bathe, K.-J.
\newblock \emph{Finite element procedures}.
\newblock Klaus-Jurgen Bathe, 2006.

\bibitem[Chen et~al.(2018)Chen, Badrinarayanan, Lee, and
  Rabinovich]{chen2018gradnorm}
Chen, Z., Badrinarayanan, V., Lee, C.-Y., and Rabinovich, A.
\newblock Gradnorm: Gradient normalization for adaptive loss balancing in deep
  multitask networks.
\newblock In \emph{International conference on machine learning}, pp.\
  794--803. PMLR, 2018.

\bibitem[De~Ryck \& Mishra(2022)De~Ryck and Mishra]{de2022error}
De~Ryck, T. and Mishra, S.
\newblock Error analysis for physics-informed neural networks (pinns)
  approximating kolmogorov pdes.
\newblock \emph{Advances in Computational Mathematics}, 48\penalty0
  (6):\penalty0 1--40, 2022.

\bibitem[De~Ryck et~al.(2022)De~Ryck, Jagtap, and Mishra]{de2022error2}
De~Ryck, T., Jagtap, A.~D., and Mishra, S.
\newblock Error estimates for physics informed neural networks approximating
  the navier-stokes equations.
\newblock \emph{arXiv preprint arXiv:2203.09346}, 2022.

\bibitem[Elhamod et~al.(2022)Elhamod, Bu, Singh, Redell, Ghosh, Podolskiy, Lee,
  and Karpatne]{elhamod2022cophy}
Elhamod, M., Bu, J., Singh, C., Redell, M., Ghosh, A., Podolskiy, V., Lee,
  W.-C., and Karpatne, A.
\newblock Cophy-pgnn: Learning physics-guided neural networks with competing
  loss functions for solving eigenvalue problems.
\newblock \emph{ACM Transactions on Intelligent Systems and Technology},
  13\penalty0 (6):\penalty0 1--23, 2022.

\bibitem[Glorot \& Bengio(2010)Glorot and Bengio]{glorot2010understanding}
Glorot, X. and Bengio, Y.
\newblock Understanding the difficulty of training deep feedforward neural
  networks.
\newblock In \emph{Proceedings of the thirteenth international conference on
  artificial intelligence and statistics}, pp.\  249--256. JMLR Workshop and
  Conference Proceedings, 2010.

\bibitem[Grossmann et~al.(2007)Grossmann, Roos, and
  Stynes]{grossmann2007numerical}
Grossmann, C., Roos, H.-G., and Stynes, M.
\newblock \emph{Numerical treatment of partial differential equations}, volume
  154.
\newblock Springer, 2007.

\bibitem[Hao et~al.(2022)Hao, Liu, Zhang, Ying, Feng, Su, and
  Zhu]{hao2022physics}
Hao, Z., Liu, S., Zhang, Y., Ying, C., Feng, Y., Su, H., and Zhu, J.
\newblock Physics-informed machine learning: A survey on problems, methods and
  applications.
\newblock \emph{arXiv preprint arXiv:2211.08064}, 2022.

\bibitem[Herman(2015)]{herman2015introduction}
Herman, R.~L.
\newblock Introduction to partial differential equations.
\newblock \emph{North Carolina, NC, USA: RL Herman}, 2015.

\bibitem[Karniadakis et~al.(2021)Karniadakis, Kevrekidis, Lu, Perdikaris, Wang,
  and Yang]{karniadakis2021physics}
Karniadakis, G.~E., Kevrekidis, I.~G., Lu, L., Perdikaris, P., Wang, S., and
  Yang, L.
\newblock Physics-informed machine learning.
\newblock \emph{Nature Reviews Physics}, 3\penalty0 (6):\penalty0 422--440,
  2021.

\bibitem[Kingma \& Ba(2014)Kingma and Ba]{kingma2014adam}
Kingma, D.~P. and Ba, J.
\newblock Adam: A method for stochastic optimization.
\newblock \emph{arXiv preprint arXiv:1412.6980}, 2014.

\bibitem[Kissas et~al.(2020)Kissas, Yang, Hwuang, Witschey, Detre, and
  Perdikaris]{kissas2020machine}
Kissas, G., Yang, Y., Hwuang, E., Witschey, W.~R., Detre, J.~A., and
  Perdikaris, P.
\newblock Machine learning in cardiovascular flows modeling: Predicting
  arterial blood pressure from non-invasive 4d flow mri data using
  physics-informed neural networks.
\newblock \emph{Computer Methods in Applied Mechanics and Engineering},
  358:\penalty0 112623, 2020.

\bibitem[Li \& Feng(2022)Li and Feng]{li2022dynamic}
Li, S. and Feng, X.
\newblock Dynamic weight strategy of physics-informed neural networks for the
  2d navier--stokes equations.
\newblock \emph{Entropy}, 24\penalty0 (9):\penalty0 1254, 2022.

\bibitem[Liu \& Nocedal(1989)Liu and Nocedal]{liu1989limited}
Liu, D. and Nocedal, J.
\newblock On the limited memory method for large scale optimization:
  Mathematical programming b.
\newblock 1989.

\bibitem[Lu et~al.(2021)Lu, Meng, Mao, and Karniadakis]{lu2021deepxde}
Lu, L., Meng, X., Mao, Z., and Karniadakis, G.~E.
\newblock Deepxde: A deep learning library for solving differential equations.
\newblock \emph{SIAM review}, 63\penalty0 (1):\penalty0 208--228, 2021.

\bibitem[McClenny \& Braga-Neto(2020)McClenny and Braga-Neto]{mcclenny2020self}
McClenny, L. and Braga-Neto, U.
\newblock Self-adaptive physics-informed neural networks using a soft attention
  mechanism.
\newblock \emph{arXiv preprint arXiv:2009.04544}, 2020.

\bibitem[Paszke et~al.(2017)Paszke, Gross, Chintala, Chanan, Yang, DeVito, Lin,
  Desmaison, Antiga, and Lerer]{paszke2017automatic}
Paszke, A., Gross, S., Chintala, S., Chanan, G., Yang, E., DeVito, Z., Lin, Z.,
  Desmaison, A., Antiga, L., and Lerer, A.
\newblock Automatic differentiation in pytorch.
\newblock 2017.

\bibitem[Peng et~al.(2020)Peng, Zhou, Zhang, and Yao]{peng2020accelerating}
Peng, W., Zhou, W., Zhang, J., and Yao, W.
\newblock Accelerating physics-informed neural network training with prior
  dictionaries.
\newblock \emph{arXiv preprint arXiv:2004.08151}, 2020.

\bibitem[Protter \& Weinberger(2012)Protter and Weinberger]{protter2012maximum}
Protter, M.~H. and Weinberger, H.~F.
\newblock \emph{Maximum principles in differential equations}.
\newblock Springer Science \& Business Media, 2012.

\bibitem[Raissi et~al.(2019)Raissi, Perdikaris, and
  Karniadakis]{raissi2019physics}
Raissi, M., Perdikaris, P., and Karniadakis, G.~E.
\newblock Physics-informed neural networks: A deep learning framework for
  solving forward and inverse problems involving nonlinear partial differential
  equations.
\newblock \emph{Journal of Computational physics}, 378:\penalty0 686--707,
  2019.

\bibitem[Raissi et~al.(2020)Raissi, Yazdani, and Karniadakis]{raissi2020hidden}
Raissi, M., Yazdani, A., and Karniadakis, G.~E.
\newblock Hidden fluid mechanics: Learning velocity and pressure fields from
  flow visualizations.
\newblock \emph{Science}, 367\penalty0 (6481):\penalty0 1026--1030, 2020.

\bibitem[Sahli~Costabal et~al.(2020)Sahli~Costabal, Yang, Perdikaris, Hurtado,
  and Kuhl]{sahli2020physics}
Sahli~Costabal, F., Yang, Y., Perdikaris, P., Hurtado, D.~E., and Kuhl, E.
\newblock Physics-informed neural networks for cardiac activation mapping.
\newblock \emph{Frontiers in Physics}, 8:\penalty0 42, 2020.

\bibitem[Shin et~al.(2020)Shin, Zhang, and Karniadakis]{shin2020error}
Shin, Y., Zhang, Z., and Karniadakis, G.~E.
\newblock Error estimates of residual minimization using neural networks for
  linear pdes.
\newblock \emph{arXiv preprint arXiv:2010.08019}, 2020.

\bibitem[Strauss(2007)]{strauss2007partial}
Strauss, W.~A.
\newblock \emph{Partial differential equations: An introduction}.
\newblock John Wiley \& Sons, 2007.

\bibitem[Sun et~al.(2020)Sun, Gao, Pan, and Wang]{sun2020surrogate}
Sun, L., Gao, H., Pan, S., and Wang, J.-X.
\newblock Surrogate modeling for fluid flows based on physics-constrained deep
  learning without simulation data.
\newblock \emph{Computer Methods in Applied Mechanics and Engineering},
  361:\penalty0 112732, 2020.

\bibitem[Wang et~al.(2022{\natexlab{a}})Wang, Li, He, and Wang]{wang2022is}
Wang, C., Li, S., He, D., and Wang, L.
\newblock Is {$L^2$} physics-informed loss always suitable for training
  physics-informed neural network?
\newblock In \emph{Advances in Neural Information Processing Systems},
  2022{\natexlab{a}}.

\bibitem[Wang et~al.(2021)Wang, Teng, and Perdikaris]{wang2021understanding}
Wang, S., Teng, Y., and Perdikaris, P.
\newblock Understanding and mitigating gradient flow pathologies in
  physics-informed neural networks.
\newblock \emph{SIAM Journal on Scientific Computing}, 43\penalty0
  (5):\penalty0 A3055--A3081, 2021.

\bibitem[Wang et~al.(2022{\natexlab{b}})Wang, Yu, and Perdikaris]{wang2022and}
Wang, S., Yu, X., and Perdikaris, P.
\newblock When and why pinns fail to train: A neural tangent kernel
  perspective.
\newblock \emph{Journal of Computational Physics}, 449:\penalty0 110768,
  2022{\natexlab{b}}.

\bibitem[Wight \& Zhao(2020)Wight and Zhao]{wight2020solving}
Wight, C.~L. and Zhao, J.
\newblock Solving allen-cahn and cahn-hilliard equations using the adaptive
  physics informed neural networks.
\newblock \emph{arXiv preprint arXiv:2007.04542}, 2020.

\bibitem[Wikipedia(2023{\natexlab{a}})]{wiki:Helmholtz_equation}
Wikipedia.
\newblock {Helmholtz equation} --- {W}ikipedia{,} the free encyclopedia.
\newblock
  \url{http://en.wikipedia.org/w/index.php?title=Helmholtz\%20equation&oldid=1117741633},
  2023{\natexlab{a}}.
\newblock [Online; accessed 26-January-2023].

\bibitem[Wikipedia(2023{\natexlab{b}})]{wiki:Poisson's_equation}
Wikipedia.
\newblock {Poisson's equation} --- {W}ikipedia{,} the free encyclopedia.
\newblock
  \url{http://en.wikipedia.org/w/index.php?title=Poisson's\%20equation&oldid=1143671910},
  2023{\natexlab{b}}.
\newblock [Online; accessed 31-May-2023].

\bibitem[Yu et~al.(2020)Yu, Kumar, Gupta, Levine, Hausman, and
  Finn]{yu2020gradient}
Yu, T., Kumar, S., Gupta, A., Levine, S., Hausman, K., and Finn, C.
\newblock Gradient surgery for multi-task learning.
\newblock \emph{Advances in Neural Information Processing Systems},
  33:\penalty0 5824--5836, 2020.

\bibitem[Zhu et~al.(2019)Zhu, Zabaras, Koutsourelakis, and
  Perdikaris]{zhu2019physics}
Zhu, Y., Zabaras, N., Koutsourelakis, P.-S., and Perdikaris, P.
\newblock Physics-constrained deep learning for high-dimensional surrogate
  modeling and uncertainty quantification without labeled data.
\newblock \emph{Journal of Computational Physics}, 394:\penalty0 56--81, 2019.

\end{thebibliography}
\bibliographystyle{icml2022}

\newpage
\appendix
\onecolumn

\section{Details of the sample case of Poisson's equation}
\label{detail:poisson}

Its PDE and boundary conditions are as follow:
\begin{align}
\label{eqn:poi8}
\begin{gathered}
\Delta u(x) = 0,\forall x\in \Omega\\
u(x) = 1,\forall x\in B_1\\
u(x) = 0,\forall x\in B_2\\
\end{gathered}
\end{align}

\begin{align}
\begin{aligned}
\Omega &= [-4, 4]^2 \setminus \{x | (x_1\pm 2)^2 + (x_2\pm 2)^2 <1\}\\
B_1 &= \{x|x_1 \in \{-4,4\}\} \cup \{x|x_2\in \{-4,4\}\}\\
B_2 &= \{x|(x_1 \pm 2)^2 + (x_2 \pm 2)^2 =1\}\\
\end{aligned}
\end{align}

\section{Details and Proofs of Theorems}

\subsection{Proof of Theorem \ref{thm:scaling}}
\label{pf:scaling}
\begin{proof}
Assume the operator of the PDE is $\mathcal L$, which is a homogeneous PDE operator of $k$ order. We can decompose the homogeneous $k$ order operator to:
\begin{align}
\begin{aligned}
    \mathcal L = \sum_{l_1+l_2+\cdots +l_n = k} \eta_{l_1,\cdots,l_n}\partial_{x_1}^{l_1}\partial_{x_2}^{l_2}\cdots \partial_{x_n}^{l_n}
\end{aligned}
\end{align}
where $\partial _{x_i}$ is the partial differential operator in $x_i$ direction, $\eta_{l_1,\cdots,l_n} \in \mathbb R$ is coefficient for the term.

Then, we use $\hat u$ to represent the output of PINN and $\hat u'(x) = \hat u(tx)$ for the output of PINN when we narrow the domain by $t$ times. We first investigate the effect of scaling on the derivatives of $\hat u'$. When we apply the term $\partial_{x_1}^{l_1}\partial_{x_2}^{l_2}\cdots \partial _{x_n}^{l_n}$ on $\hat u'(x)$, we can obtain:

\begin{align}
\begin{aligned}
    \partial_{x_1}^{l_1}\partial_{x_2}^{l_2}\cdots \partial _{x_n}^{l_n}\hat u'(x)&=\partial_{x_1}^{l_1}\partial_{x_2}^{l_2}\cdots \partial _{x_n}^{l_n-1}\partial_{x_n}(\hat u(tx))\\
    &=\partial_{x_1}^{l_1}\partial_{x_2}^{l_2}\cdots \partial _{x_n}^{l_n-1} ((\partial _{x_n}\hat u)(tx) \cdot \partial_{x_n}(tx))\\
    &=\partial_{x_1}^{l_1}\partial_{x_2}^{l_2}\cdots \partial _{x_n}^{l_n-1} ((\partial _{x_n}\hat u)(tx) \cdot t)\\
    &=t\cdot \partial_{x_1}^{l_1}\partial_{x_2}^{l_2}\cdots \partial _{x_n}^{l_n-1} (\partial _{x_n}\hat u)(tx)\\
    &= \cdots\\
    &= t^{l_n} \cdot \partial_{x_1}^{l_1}\partial_{x_2}^{l_2}\cdots \partial _{x_{n-1}}^{l_{n-1}} (\partial _{x_n}^{l_n}\hat u)(tx)\\
    &= \cdots\\
    &= t^{l_1+\cdots+l_n} (\partial_{x_1}^{l_1}\partial_{x_2}^{l_2}\cdots \partial _{x_n}^{l_n}\hat u)(tx)
\end{aligned}
\end{align}

Therefore:

\begin{align}
\begin{aligned}
    \mathcal L \hat u'(x)
    &=\sum_{l_1+\cdots+l_n = k} \eta_{l_1,\cdots,l_n}\partial_{x_1}^{l_1}\partial_{x_2}^{l_2}\cdots \partial _{x_n}^{l_n}\hat u'(x)\\
    &=  \sum_{l_1+\cdots+l_n = k}\eta_{l_1,\cdots,l_n}  t^{l_1+\cdots+l_n}(\partial_{x_1}^{l_1}\partial_{x_2}^{l_2}\cdots \partial _{x_n}^{l_n}\hat u)(tx)\\
    &= t^k (\mathcal L\hat u)(tx)
\end{aligned}
\end{align}

Now, we can see the effect of domain scaling on PDE loss as well as boundary loss. The $L^2$ loss before scaling is:

\begin{align}
\begin{aligned}
L_f&=\frac 1{|T_f|} \sum_{x\in T_f} \|\mathcal L \hat u(x)\|_2^2\\
L_b&=\frac 1{|T_b|} \sum_{x\in T_b} \|\hat u(x) - B(x)\|_2^2\\
\end{aligned}
\end{align}
Where $B(x)$ is the boundary condition.

Now, if we narrow the domain by $t$ times, the new loss function $L_f', L_b'$ will be:

\begin{align}
\begin{aligned}
L_f' &= \frac1{|T_f|} \sum_{x\in T_f} \|\mathcal L \hat u'(x)\|_2^2 \\
&= \frac 1{|T_f|} \sum_{x\in T_f} \| t^k \cdot (\mathcal L \hat u)(tx)\|_2^2\\
&= t^{2k} \frac 1{|T_f|} \sum _{x\in T_f} \| (\mathcal L \hat u)(tx)\|_2^2 \\
&= t^{2k}  L_f\\
L_b' &= \frac 1{|T_b|} \sum_{x\in T_b} \| \hat u'(x)-B'(x)\|_2^2\\
&= \frac 1{|T_b|} \sum_{x\in T_b} \|u(tx) - B(tx)\|_2^2 \\
&= L_b
\end{aligned}
\end{align}
Which leads to the conclusion of the theorem.
\end{proof}

\subsection{Proof of Theorem \ref{thm:parabolic-2}}
\label{pf:parabolic-2}

In the following proof, we denote the parabolic operator as $\mathcal {L}$. It can be formalized as:

\begin{equation}
        \mathcal L[u]=\sum_{1\le i,j\le d}a_{i,j}(x,t)\frac{\partial ^2 u}{\partial x_i \partial x_j}+\sum _{1\le i\le d}b_i(x,t)\frac{\partial u}{\partial x_i}+c(x)u-\frac{\partial u}{\partial t}
\end{equation}

Where $a_{i,j}, b_i, c \in C(\Omega)$ are the coefficient for the parabolic operator.

Firstly, we have to cite the following lemma:

\begin{lemma}[Maximum principle for Parabolic PDEs]
\label{thm:parabolic-1}
Suppose $\Omega \subset \mathbb R_x^d \times \mathbb R_t^+$ and $\mathcal L$ is a parabolic operator defined on $\Omega$. If $\mathcal L[u]\ge 0, c\le 0,\forall x\in \Omega$ and $u\in C^2(\Omega)\cap C^0(\overline \Omega)$ reaches maximum $M\ge 0$ in the interior of $\Omega$, then $\sup _{x\in\partial \Omega} u \ge M$. 
\end{lemma}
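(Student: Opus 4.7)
The plan is to follow the classical Nirenberg-style proof of the weak maximum principle for second-order parabolic operators, which splits into two steps: first establish a strict-inequality version by pointwise sign analysis at an interior maximum, then recover the weak inequality via a small perturbation in the time variable.

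For the strict case, I would assume $\mathcal{L}[u] > 0$ on all of $\Omega$ and derive a contradiction from the supposition that the maximum $M \geq 0$ is attained at an interior point $(x_0, t_0)$. At such a point the usual first- and second-order conditions give $\nabla_x u(x_0, t_0) = 0$, $u_t(x_0, t_0) = 0$, and the spatial Hessian $D_x^2 u(x_0, t_0) \preceq 0$; combined with positive semidefiniteness of $(a_{ij})$ (by parabolicity), the trace identity forces $\sum_{i,j} a_{ij}\, u_{x_i x_j} = \operatorname{tr}(A\, D_x^2 u) \leq 0$. Since $c \leq 0$ and $M \geq 0$, the zero-order contribution also satisfies $c(x_0, t_0)\, u(x_0, t_0) = cM \leq 0$. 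Summing the four terms of $\mathcal{L}$ at $(x_0, t_0)$ yields $\mathcal{L}[u](x_0, t_0) \leq 0$, contradicting strict positivity. Hence the maximum must lie on $\partial\Omega$ in this case.

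The second step reduces the weak inequality $\mathcal{L}[u] \geq 0$ to the strict case by introducing $v_\epsilon(x, t) := u(x, t) - \epsilon t$ for small $\epsilon > 0$. A direct computation using $c \leq 0$ and $t \geq 0$ gives $\mathcal{L}[v_\epsilon] = \mathcal{L}[u] - \epsilon(ct - 1) \geq \epsilon > 0$. Let $T := \sup\{ t : (x, t) \in \Omega \}$ and let $(x^*, t^*)$ be any interior point at which $u$ realizes $M$; then $v_\epsilon(x^*, t^*) \geq M - \epsilon T$. Applying the strict case to $v_\epsilon$ places its (nonnegative) maximum on $\partial\Omega$, so $\sup_{\partial\Omega} v_\epsilon \geq M - \epsilon T$. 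Because $u \geq v_\epsilon$ on $\partial\Omega$ (since $t \geq 0$), this gives $\sup_{\partial\Omega} u \geq M - \epsilon T$, and sending $\epsilon \to 0^+$ closes the argument.

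The main obstacle I expect is not the derivative calculation but the bookkeeping in the perturbation step: the sign $c \cdot v_\epsilon \leq 0$ is needed at the interior maximum of $v_\epsilon$, and this in turn requires the shifted maximum to remain nonnegative. This is precisely where the hypothesis $M \geq 0$ enters, and why $\epsilon$ must be chosen small (e.g., $\epsilon < M/T$ when $M > 0$). The degenerate case $M = 0$ is not directly handled by the additive perturbation, but can be dispatched by the exponential change of variables $u = e^{Kt} w$: the operator for $w$ has zero-order coefficient $c - K$, which is strictly negative for $K$ large, restoring the sign argument without any positivity assumption on the new maximum.
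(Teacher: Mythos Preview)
The paper does not supply its own argument for this lemma; it simply invokes Theorem~7 of Chapter~3, Section~3 in Protter--Weinberger. Your two-step perturbation proof is the classical (Nirenberg-style) route to the weak parabolic maximum principle and, for $M>0$, is correct as written, so you are effectively reproducing what the cited reference contains rather than diverging from it.

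The one soft spot is the $M=0$ branch. The substitution $u=e^{Kt}w$ does shift the zero-order coefficient to $c-K<0$, but this does \emph{not} ``restore the sign argument without any positivity assumption on the new maximum'': at an interior maximum with value $\tilde M<0$ one has $(c-K)\tilde M>0$, so the desired inequality $\tilde{\mathcal L}[w]\le 0$ at that point fails rather than being recovered. Making $c$ more negative helps only when the maximum is nonnegative, which is exactly the hypothesis you were trying to drop. The lemma at $M=0$ is still true, but it actually requires the \emph{strong} parabolic maximum principle (constancy on the backward component through the interior maximizer), which is what the cited Protter--Weinberger theorem provides. Alternatively, note that the only use of the lemma in the paper (the proof of Theorem~\ref{thm:parabolic-2}) is to deduce $h_1-v\le 0$ in $\Omega$ from $h_1-v\le 0$ on $\partial\Omega$; for that conclusion the $u^+$-form of the weak principle, $\sup_{\overline\Omega} u\le \sup_{\partial\Omega}u^+$, already suffices, and your perturbation argument proves precisely this version without any case split on the sign of $M$.
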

\begin{proof}
Thanks to the Theorem 7 in Chap.3 Sec.3 in \cite{protter2012maximum}, we can get the proof of the theorem.
\end{proof}

Now we can start the proof for Theorem \ref{thm:parabolic-2}
\begin{proof}
We first assume that $c\le 0$ holds in $\Omega$, and define $h_1 = u_\theta - \tilde u, h_2 = \mathcal L[u_\theta]-f$. Due to the linearity of operator $\mathcal {L}$, we can see that $\mathcal L[h_1]=h_2$

Since $\Omega$ is bounded, we assume that $\Omega$ lies in the slab $0\le x_1\le d$, and set $\mathcal L_0[u]=\sum a_{i,j}(x,t)\frac{\partial ^2 u}{\partial x_i \partial x_j}+\sum b_i(x,t)\frac{\partial u}{\partial x_i}-\frac{\partial u}{\partial t}$. 

In addition, according to the definition of parabolic differential operators, the coefficient matrix $A(x, t) = \{a_{i,j}(x,t)\}$ is positive definite. Thus, we can define $\lambda(x, t)>0$ as the smallest eigenvalue of $A(x, t)$.

We define $\beta = \sup _{\Omega}(|\mathbf b|/\lambda)$ (well defined due to the maximum principle of continuous function in $\overline \Omega$). So for $\forall \alpha \ge \beta+1$ we have:

\begin{equation}
\mathcal L_0e^{\alpha x_1} = (\alpha ^2a_{1,1}+\alpha b_1)e^{\alpha x_1}\ge \lambda (\alpha ^2 -\alpha \beta )e^{\alpha x_1}\ge \lambda
\end{equation}
Since $a_{1,1}\ge \lambda$ always holds for positive definite matrices. We denote $h_1^+$ as the positive part of $h_1$ and $h_2^-$ for the negative part of $h_2$. Let
\begin{equation}
    v=  \sup _{\partial \Omega} h_1^+ + (e^{\alpha d}-e^{\alpha x_1})\sup _{\Omega}\frac{|h_2^-|}{\lambda}
\end{equation}

Then $v\ge 0$ always holds because $x_1\le d$ and the two parts are always positive. 

Thus,  
\begin{equation}
\mathcal Lv = \mathcal L_0 v+cv\le \mathcal L_0 v = -\sup _{\Omega}\frac{|h_2^-|}{\lambda }\mathcal L_0[e^{\alpha x_1}]\le -\lambda \sup _{\Omega}\frac{|h_2^-|}{\lambda}\\
\end{equation}
Therefore,
$$
\mathcal L(v-h_1)\le -\lambda \sup _{\Omega}\frac{|h_2^-|}{\lambda}-\mathcal L[h_1] = -\lambda \sup _{\Omega}\frac{|h_2^-|}{\lambda}-h_2=-\lambda (\sup _{\Omega}\frac{|h_2^-|}{\lambda}+\frac{h_2}\lambda)\le 0
$$
By Lemma \ref{thm:parabolic-1}, we can get that $h_1\le v$ always holds in $\Omega$. Thus, we have:
$$
\sup _{\Omega} h_1 \le \sup_{\Omega} v \le \sup _{\partial \Omega} h_1^+ + (e^{\alpha d}-1)\sup_{\Omega}\frac{|h_2^-|}{\lambda}
$$
Replacing $h_1,h_2$ by $-h_1,-h_2$, we obtain:
$$
\sup _{\Omega} |h_1|\le \sup _{\partial \Omega}|h_1| + (e^{\alpha d}-1)\sup_\Omega \frac{|h_2|}{\lambda}\le \delta_1 + (e^{\alpha d}-1)\frac{\delta_2}{\lambda_0}
$$

Then, we consider the situation that $c>0$:

Since there exist $\eta >0$ satisfies that $c\le \eta$ always holds on $\overline \Omega$, then we define $\tilde h_1 = e^{-\eta t}h_1$. Then $(\mathcal L -\eta)[\tilde h_1] = (\mathcal Le^{-\eta t})h_1+e^{-\eta t}(\mathcal Lh_1)-\eta e^{-\eta t}h_1=e^{-\eta t}(\mathcal Lh_1)=e^{-\eta t}h_2$. And the operator $\mathcal L-\eta=\sum a_{i,j}D_{i,j}+\sum b_i D_i+c-\eta -D_t$ satisfies $c-\eta \le 0$. Therefore, using the conclusion above we can get that:
$$
\sup _{\Omega} |\tilde h_1|\le \sup _{\partial \Omega} |\tilde h_1|+(e^{\alpha d}-1)\sup _{\Omega}\frac{|e^{-\lambda t}h_2|}{\lambda }\\
$$
Subsequently, if we assume that $\Omega$ lies in $t_0\le t\le t_0+T$, then we can get
\begin{align}
\begin{aligned}
\sup _{\Omega}|h_1|\le& e^{\eta T}(\sup _{\partial \Omega} |h_1|+(e^{\alpha d}-1)\sup _{\Omega}\frac{|h_2|}{\lambda})\\\le& e^{\eta T}(\delta_1+(e^{\alpha d}-1)\frac{\delta_2}{\lambda _0})
\end{aligned}
\end{align}

By setting $C_1 = e^{\eta T}, C = (e^{\alpha d} - 1) / \lambda_0$, we can get the proof of the theorem. 
\end{proof}

\subsection{Details of Theorem \ref{thm:hyperbolic-2}}
\label{pf:hyperbolic-2}

In the following proof, we denote the hyperbolic operator as $\mathcal {L}$, which can be formalized as:
\begin{equation}
    \mathcal L[u]=a(x,t)\frac{\partial ^2 u}{\partial^2 x} + 2b(x,t)\frac{\partial ^2 u}{\partial x \partial t} + c(x,t)\frac{\partial ^2 u}{\partial ^2t} + d(x,t)\frac{\partial  u}{\partial x } + e(x,t)\frac{\partial u}{ \partial t} + h(x,t)u
\end{equation}

Where $a, b, c, d, e, h\in C(\Omega)$ are coefficient functions for the equation.

We will first give definition to Admissible Domain and specify the conditions for the hyperbolic operator $\mathcal {L}$ in Theorem \ref{thm:hyperbolic-2}. Using this assumptions, we will cite a lemma proved in \cite{protter2012maximum} and finally use this lemma to prove Theorem \ref{thm:hyperbolic-2}.

\subsubsection{Definition of Admissible domain}
\label{def:admissible domain}
\begin{definition}
\label{def:characteristic curves} 
(characteristic curves):

The definition can also be found in \cite{protter2012maximum}

For every point $(x,t)\in \Omega$ that satisfies $c(x,t)\ne 0$, we have two characteristic curves, which are the solutions to the following ordinary differential equation:
$$
c(\frac{\text d x}{\text d t})^2 - 2b\frac{\text d x}{\text d t}+a=0
$$
Solving for $\text dx/\text dt$, we have:
$$
\frac{\text dx}{\text dt}=\frac{-b\pm\sqrt {b^2-ac}}{-c}
$$
Thus, we can have two characteristics $C_+$ and $C_-$, corresponding to the two signs in front of the square root.
\end{definition}
\begin{definition}
\label{def:characteristic triangle}
(characteristic triangle):

The definition can also be found in \cite{protter2012maximum}

We assume that $c\le c_0<0$ and for every point $C=(x,t)\in \Omega$, we construct two characteristic curves $C_+,C_-$. We denote by $A$ the point where $C_+$ hits the $x$-axis and by $B$ the point where $C_-$ curve hits it. Then the segment $AB$ and two curves $AC$ and $BC$ form a characteristic triangle $ABC$.

\end{definition}
\begin{definition}
(admissible domain):

The definition can also be found in \cite{protter2012maximum}

A domain $\Omega \subset \mathbb{R}_x\times \mathbb{R}_{t}^+$ is called an admissible domain if it has the property that for every point $C=(x,t)\in \Omega$, the corresponding characteristic triangle $ABC$ with $AB$ on the $x$-axis is also in $\Omega$.
\end{definition}

\subsubsection{Conditions for the hyperbolic operator $\mathcal {L}$ in Theorem \ref{thm:hyperbolic-2}}

The condition for the hyperbolic operator is:
\begin{align}\label{equ:hyperbolic-constraint}
\begin{aligned}
&c\le c_0<0\\
&K_{\pm}\ge 0\\
&\frac{\partial ^2 a}{\partial^2 x} + 2\frac{\partial ^2 b}{\partial x \partial t} + \frac{\partial ^2 c}{\partial ^2t} - \frac{\partial d}{\partial x } - \frac{\partial e}{ \partial t} + h\ge 0
\end{aligned}
\end{align}
where $c_0$ is a negative constant. $K_\pm$ are:

\begin{align}
\begin{aligned}
K_\pm=&\frac{\partial}{\partial t}(\sqrt{b^2-ac})+\frac bc\frac{\partial}{\partial x}(\sqrt{b^2-ac})+\frac1c(\frac{\partial b}{\partial x}+\frac{\partial c}{\partial t}-e)\sqrt{b^2-ac}\\&\mathop{\pm}\left[-\frac1{2c}\frac{\partial}{\partial x}(b^2-ac)+\frac{\partial a}{\partial x}+\frac{\partial b}{\partial t}-d-\frac bc (\frac{\partial b}{\partial x}+\frac{\partial c}{\partial t}-e)\right]
\end{aligned}
\end{align}

\subsubsection{Proof of Theorem \ref{thm:hyperbolic-2}}

In order to proof the theorem, we first introduce a Lemma in \cite{protter2012maximum}:

\begin{lemma}[Maximum principle for Hyperbolic PDEs]
\label{thm:hyperbolic-1}
Suppose $\Omega \subset \mathbb{R}_x\times \mathbb{R}_t^+$ is an admissible domain and denote $\Gamma_0 = \Omega \cap \{t=0\}$. Assume the operator $\mathcal L$ satisfies the constraints in Equation (\ref{equ:hyperbolic-constraint}). Then, if a function $u\in C^2(\Omega)\cap C^1(\Omega\cup \Gamma_0)$ satisfies:
\begin{align}
\begin{aligned}
&\mathcal L[u]\ge 0\ \ \forall (x,t)\in \Omega\\
&u\le 0\ \ \forall (x,t)\in \Gamma_0\\
&b\frac{\partial u}{\partial x}+c\frac{\partial u}{\partial t}-(\frac{\partial b}{\partial x}+\frac{\partial c}{\partial t}-e)u\ge 0\ \ \forall (x,t)\in \Gamma_0
\end{aligned}
\end{align}

Then $u\le 0$ in $\Omega$. 

Moreover, if we replace $u\le 0$ in $\Gamma_0$ by $u\le M$, where $M\ge 0$ is a constant, and add two constraints that $h\le 0$ and $\frac{\partial b}{\partial x}+\frac{\partial c}{\partial t}-e\ge 0, \forall (x,t)\in \Gamma_0$, then $u\le M$ in $\Omega$. 
\end{lemma}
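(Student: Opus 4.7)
The plan is to follow the classical characteristic-coordinate argument of Protter--Weinberger. A second-order hyperbolic operator can be put into canonical form along its characteristics, and once this is done the maximum principle reduces to a first-order argument obtained by integration along $C_+$ and $C_-$ inside the characteristic triangle of any interior point. The sign hypotheses collected in (\ref{equ:hyperbolic-constraint}), together with $c\le c_0<0$, are exactly the conditions that make that first-order argument work.

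First I would introduce characteristic coordinates $(\xi,\eta)$ defined by $\xi=\text{const}$ along $C_+$ and $\eta=\text{const}$ along $C_-$ (Definition~\ref{def:characteristic curves}), using $c\le c_0<0$ to guarantee a non-singular change of variables throughout $\Omega$. In these coordinates $\mathcal L[u]=\mathcal L[u]$ takes the canonical form
\begin{equation}
A\bigl(u_{\xi\eta}+\alpha\,u_\xi+\beta\,u_\eta+\gamma\,u\bigr)=\mathcal L[u],
\end{equation}
with $A>0$ a multiple of $-c/\sqrt{b^2-ac}$. A direct but tedious chain-rule calculation identifies $\alpha$ and $\beta$, up to positive factors, with the quantities $K_+$ and $K_-$ in (\ref{equ:hyperbolic-constraint}), and identifies $\gamma$ with a positive multiple of the combination $a_{xx}+2b_{xt}+c_{tt}-d_x-e_t+h$ in the third line of (\ref{equ:hyperbolic-constraint}). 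Thus the lemma's hypotheses translate directly into $\alpha\ge 0$, $\beta\ge 0$, $\gamma\ge 0$, and $\mathcal L[u]\ge 0$ becomes $u_{\xi\eta}+\alpha u_\xi+\beta u_\eta+\gamma u\ge 0$.

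Next I would argue by contradiction: suppose $u(P)>0$ at some interior point $P=(x_0,t_0)$. Admissibility of $\Omega$ (Definition in~\ref{def:admissible domain}) places the characteristic triangle $ABP$ inside $\overline\Omega$ with $AB\subset\Gamma_0$. Following Protter--Weinberger, set $w=u_\xi$ along $C_-$ (respectively $u_\eta$ along $C_+$); the canonical form becomes a first-order linear ODE in $w$ along each characteristic with non-negative coefficients coming from $\alpha,\beta,\gamma\ge 0$. Integrating from $P$ down to the base $AB$ and invoking a Gronwall-type inequality, one propagates positivity of $u$ to the base, where the initial condition $u\le 0$ on $\Gamma_0$ and the one-sided derivative inequality $bu_x+cu_t-(b_x+c_t-e)u\ge 0$ on $\Gamma_0$ (which is precisely the inflow condition for the reduced first-order system along characteristics) close the argument and contradict $u(P)>0$. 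For the strengthened statement, set $v=u-M$; by linearity $\mathcal L[v]=\mathcal L[u]-Mh\ge 0$ since $h\le 0$ and $M\ge 0$, on $\Gamma_0$ we have $v\le 0$, and
\begin{equation}
bv_x+cv_t-(b_x+c_t-e)v=\bigl[bu_x+cu_t-(b_x+c_t-e)u\bigr]+M(b_x+c_t-e)\ge 0
\end{equation}
using the added assumption $b_x+c_t-e\ge 0$ on $\Gamma_0$. Applying the first part of the lemma to $v$ yields $u\le M$ on $\Omega$.

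The main obstacle is the canonical-form computation: verifying that $K_\pm$ really are the coefficients of $u_\xi,u_\eta$ after the characteristic change of variables, and that the third condition in (\ref{equ:hyperbolic-constraint}) is exactly the sign condition on the zeroth-order term. This is a lengthy but entirely mechanical computation with the chain rule applied to the defining ODE of the characteristics. Once in place, the remainder is the standard integration-along-characteristics Gronwall argument. Since the statement is classical (Protter--Weinberger, Chapter~4, Theorem~6), the cleanest presentation is to verify the translation between (\ref{equ:hyperbolic-constraint}) and the canonical-form signs, then cite their result for the first-order maximum principle itself rather than reproduce the full derivation.
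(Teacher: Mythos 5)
Your proposal is correct and ultimately lands where the paper does: the paper's entire proof of this lemma is a citation to Chapter 4, Section 3 of Protter--Weinberger, and you likewise conclude that the cleanest route is to verify the dictionary between the constraints in Equation (\ref{equ:hyperbolic-constraint}) and the canonical-form sign conditions and then cite their first-order maximum principle. Your sketch of the characteristic-coordinate argument and your explicit reduction $v=u-M$ for the ``moreover'' clause (using $\mathcal L[M]=hM\le 0$ and $b_x+c_t-e\ge 0$ on $\Gamma_0$) are both sound and in fact supply more detail than the paper gives.
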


\begin{proof}
Detailed proof can be found in Chap.4 Sec.3 in \cite{protter2012maximum}
\end{proof}

Now, we can start the proof of our theorem.

\begin{proof}
Firstly, we assume that $u_\theta = \tilde u$ on the boundary $\Gamma_0$.

We define $h_1 = u_\theta - \tilde u, h_2 = \mathcal L[u_\theta]-f$. Due to the linearity of hyperbolic operator $\mathcal L$, we can see that $\mathcal L[h_1]=h_2$ and $h_1=0$ on $\Gamma_0$.

Define $v =  (e^{\alpha t}-1)\sup _{\Omega}|h_2^-|$, Here, $\alpha = \frac{N+\sqrt{N^2-4c_0}}{-2c_0}>0$ where $N=\sup _{\Omega}|e|$

Therefore, by the definition of $\alpha$, we have $\alpha^2 c+\alpha e\le \alpha^2 c_0+\alpha M+1-1\le -1$, so:
$$
\mathcal L(e^{\alpha t})=(\alpha ^2c+\alpha e)e^{\alpha t}\le -e^{\alpha t}
$$
According to the assumption that $h\le 0$ and the definition of $v$:
\begin{align}
\begin{aligned}
(L+h)(v) &= (Le^{\alpha t}) \sup_\Omega|h_2^{-}|+hv\le -e^{\alpha t}\sup _\Omega |h_2^-|\\&\le -\sup _\Omega |h_2^-|
\end{aligned}
\end{align}
Therefore,
$$
(L+h)(h_1-v)\ge h_2+\sup _\Omega |h_2^-|\ge 0
$$
At the same time, given that $h_1=v=0$ on $\Gamma_0$, we have:
\begin{align}
\begin{aligned}
-b\frac{\partial (h_1-v)}{\partial x}-c\frac{\partial (h_1-v)}{\partial t}+(b_x+c_t-e)(u-v)\\=c\frac{\partial v}{\partial t}=c\alpha e^{\alpha t}\sup _\Omega |f^-|\le 0
\end{aligned}
\end{align}
Thus, according to Theorem \ref{thm:hyperbolic-1}, $h_1\le v$ holds on $\Omega$, so $\sup h_1\le (e^{\alpha T}-1)\sup _\Omega |h_2^-|$, where $T$ is the upper bound of $t$-coordinate of the points in $\Omega$.

By replacing $h_1,h_2$ with $-h_1,-h_2$, we have $\sup |h_1|\le (e^{\alpha T}-1)\delta_2$.

Now, we consider $u_\theta \ne \tilde u$ on boundary $\Gamma_0$

Let $w$ be the solution to the following PDE:

\begin{align}
\begin{aligned}
\mathcal L[w](x) = f(x),\ \  &\forall x\in \Omega\\
w(x) =u_\theta(x), \ \ &\forall x\in \partial \Omega
\end{aligned}
\end{align}

So $h_1=u_\theta-w$ and $h_2 = \mathcal L[u_\theta] - f$ satisfies the conditions above. so $\|u_\theta-w\|_{L^\infty}\le (e^{\alpha T}-1)\delta_2$

Also, $h_3=\tilde u-w$ satisfies that $\mathcal L[h_3]=0$ and $h_3\le \sup _{\Gamma_0}|u_\theta - \tilde u|<\delta_1$, so by Theorem \ref{thm:hyperbolic-1}, $\sup _{\Omega}h_3 \le \delta_1$

Replacing $h_3$ by $-h_3$, we have $\|\tilde u-w\|_{L^\infty}\le \delta_1$, so finally $\|u_\theta - \tilde u\|_{L^\infty}\le \delta_1+(e^{\alpha T}-1)\delta_2$. Here, taking $C=e^{\alpha T}-1$ is a possible value.
\end{proof}

\subsection{Proof of Theorem \ref{thm:L2control}}
\label{pf:L2control}
\begin{proof}

We first define the error in the domain: $h_1 = u_\theta - \tilde u, h_2 = \mathcal L[u_\theta] - f$, and we assume the error can be approximated by a set of base function $\{\phi_i\}$. That is:

\begin{align}
\begin{aligned}
    h_1(x) = (1 + \varepsilon_1(x))\sum_{i = 1}^n \lambda_i \phi_i(x) \\
    h_2(x) = (1 + \varepsilon_2(x))\sum_{i = 1}^n \eta_i \phi_i(x) 
\end{aligned}
\end{align}

Where $\lambda_i, \eta_i$ are coefficients and $\varepsilon_1, \varepsilon_2$ are the \textit{relative} errors in the approximation. A proper set of the base function can be produced by FEM methods in $\Omega$. 

Then, according to the definition in Equation (\ref{equ:pinn-loss}), we have:
\begin{align}
\begin{aligned}
L_b &= \frac {1}{|T_b|} \sum _{x\in T_b} |h_1(x)|^2\\
&= \frac 1{|T_b|} \sum_{x\in T_b}\left ((1 + \varepsilon_1(x))\sum_{i=1}^n \lambda_i\phi_i(x)\right )^2 \\
&\ge  \frac 1{|T_b|} \sum_{x\in T_b}(\sum_{i=1}^n \lambda_i\phi_i(x))^2 \cdot \inf_{x\in \partial \Omega} (1 + \varepsilon_1(x))^2\\
&\ge \sum_{i=1}^n\sum_{j=1}^n \lambda_i \lambda_j \left (\frac 1{|T_b|} \sum_{x\in T_b}\phi_i(x)\phi_j(x)\right )\cdot  (1 - \|\varepsilon_1\|_{L_\infty})^2\\
L_f &= \frac {1}{|T_b|} \sum _{x\in T_b} |h_2(x)|^2\\
&= \frac 1{|T_f|} \sum_{x\in T_f}\left ((1 + \varepsilon_2(x))\sum_{i=1}^n \eta_i\phi_i(x)\right )^2 \\
&\ge  \frac 1{|T_f|} \sum_{x\in T_f}(\sum_{i=1}^n \eta_i\phi_i(x))^2 \cdot \inf_{x\in \Omega} (1 + \varepsilon_2(x))^2\\
&\ge \sum_{i=1}^n\sum_{j=1}^n \eta_i \eta_j \left (\frac 1{|T_f|} \sum_{x\in T_f}\phi_i(x)\phi_j(x)\right )\cdot  (1 - \|\varepsilon_2\|_{L_\infty})^2\\
\end{aligned}
\end{align}

We denote $a_{i,j} = \frac 1{|T_b|} \sum_{x\in T_b}\phi_i(x)\phi_j(x)$ and it can construct a matrix $A = \{a_{i,j}\}$. The matrix is positive definite since it is the metric matrix for the space $V = \mathop{\mathrm{span}} \{\phi_i\}$ equipped with the inner product $(f, g) = \frac 1{|T_b|} \sum_{x\in T_b}f(x)g(x)$. Similarly, when we define $b_{i,j} = \frac 1{|T_f|} \sum_{x\in T_f}\phi_i(x)\phi_j(x)$, it also constructs a positive definite matrix $B = \{b_{i,j}\}$.

Therefore:

\begin{align}\label{equ:inequ-for-Ls}
\begin{aligned}
L_b & \ge \mathbb {\lambda} ^T A \mathbf {\lambda} \cdot (1 - \|\varepsilon_1\|_{L_\infty})^2\\
L_f & \ge \mathbf {\eta} ^T B \mathbf {\eta} \cdot (1 - \|\varepsilon_2\|_{L_\infty})^2\\
\end{aligned}
\end{align}

Moreover, we have:
\begin{align}\label{equ:inequ-for-hs}
\begin{aligned}
\sup_{x\in \partial \Omega} |h_1| &\le \sup_{x\in \partial \Omega} \sum_{i=1}^n (1+\varepsilon_1(x)) |\lambda_i \phi_i(x)|\\
&\le (1 + \|\varepsilon_1\|_{L_\infty}) \sum_{i=1}^n |\lambda_i| \sup_{x\in \partial \Omega} |\phi_i(x)|\\
\sup_{\Omega} |h_2| &\le \sup_{x\in \Omega} \sum_{i=1}^n (1+\varepsilon_2(x)) |\eta_i \phi_i(x)|\\
&\le (1+\|\varepsilon_2\|_{L_\infty}) \sum_{i=1}^n |\eta_i| \sup_{x\in  \Omega} |\phi_i(x)|\\
\end{aligned}
\end{align}

Now, we define (we assume the approximation error $\|\varepsilon_1\|_{L_\infty} ,\|\varepsilon_1\|_{L_\infty}$ are close to $0$ ):

\begin{align}\label{equ:def-of-D}
\begin{aligned}
D_1 &= \sup_{\|\lambda\| = 1} \frac{(1 + \|\varepsilon_1\|_{L_\infty}) \sum _{i=1}^n |\lambda_i|\sup_{x\in \partial \Omega} |\phi_i(x)|} {(1 - \|\varepsilon_1\|_{L_\infty}) \sqrt{\lambda^T A \lambda}} \\
D_2 &= \sup_{\|\eta\| = 1} \frac{(1 + \|\varepsilon_2\|_{L_\infty}) \sum _{i=1}^n |\eta_i|\sup_{x\in \Omega} |\phi_i(x)|} {(1 - \|\varepsilon_2\|_{L_\infty}) \sqrt{\eta^T B \eta}} \\
\end{aligned}
\end{align}

$D_1$ is well posed since $A, B$ are positive definite, thus the function is bounded in the compact domain $\|\lambda\|=1$. The well-posedness for $D_2$ is similar.

Finally, by combining Equation (\ref{equ:inequ-for-Ls}), (\ref{equ:inequ-for-hs}) and (\ref{equ:def-of-D}), we have:
\begin{align}
\begin{aligned}
\sup_{x\in \partial \Omega} |h_1| & \le D_1 \sqrt{L_b}\\
\sup_{x\in \Omega} |h_2| & \le D_2 \sqrt{L_f}\\
\end{aligned}
\end{align}

Given the condition in the theorem \ref{thm:L2control}, we have:
\begin{align}
\begin{aligned}
    \|u_\theta - \tilde u\|_{L_\infty} &\le C_1(\sup_{x\in \partial \Omega} |u_\theta - \tilde u| + C \sup_{x\in \Omega} |\mathcal L[u_\theta] - f|)\\
    & = C_1(\sup_{x\in \partial \Omega} |h_1| + C \sup_{x\in \Omega} |h_2|)\\
    & \le \max\{D_1, C\cdot D_2\}C_1(\sqrt{L_b} + C \sqrt{L_f})\\
\end{aligned}
\end{align}

By setting $C_2 = \max\{D_1, C\cdot D_2\}C_1$, we can get the proof of our theorem.

\end{proof}

\subsection{Proof of Theorem \ref{thm:laplaceerror}}
\label{pf:laplaceerror}
\begin{proof}

First, let $\Omega \subset \mathbb R^2$ and $f\in C^\infty (\Omega), g\in C^\infty(\partial \Omega)$. The Poisson's Equation is:
\begin{align}
\begin{aligned}
-\Delta u& = f, \forall x\in \Omega\\
u &= g, \forall x \in \partial \Omega
\end{aligned}
\end{align}
We denote the solution to the equation above as $\tilde u$.

Assume the PINN estimation is $u_\theta$ and the error is $v = u - u_\theta$, we will have:
\begin{align}
\begin{aligned}
-\Delta v &= f + \Delta u_\theta, \forall x\in \Omega\\
v& = g - u_\theta, \forall x \in \partial \Omega
\end{aligned}
\end{align}
and we can explicitly write the error with the help of Green function $G(x,\xi)$ (See chapter 7.5 of \cite{herman2015introduction} )
\begin{align}
\begin{aligned}
v(x) = \int_{\Omega} (f(\xi) + \Delta u_\theta(\xi))G(x,\xi) \text d \xi - \int _{\partial\Omega} (g(\xi) - u_\theta(\xi))\nabla_{\xi}G(x, \xi) \cdot \mathbf{n} \text d \sigma
\end{aligned}
\end{align}
where $\nabla _{\xi} G(x,\xi)$ is the gradient of $G(x,\xi)$ with respect to $\xi$ and $\mathbf n$ is the normal vector of $\partial \Omega$.

Then, we use Cauchy inequality:

\begin{align}
\begin{aligned}
 &\int_{\Omega} (f(\xi) + \Delta u_\theta(\xi))G(x,\xi) \text d \xi \\
 \le & \sqrt{ \int_{\Omega} (f(\xi) + \Delta u_\theta(\xi))^2 \text d \xi}\sqrt{\int _{\Omega}G^2(x,\xi) \text d \xi}\\
 &\int _{\partial\Omega} (g(\xi) - u_\theta(\xi))\nabla_{\xi}G(x, \xi) \cdot \mathbf{n} \text d \sigma\\
 \le & \sqrt{\int _{\partial\Omega} (g(\xi) - u_\theta(\xi))^2 \text d \xi }\sqrt{\int_{\partial\Omega}(\nabla_{\xi}G(x, \xi) \cdot \mathbf{n})^2 \text d \sigma}
\end{aligned}
\end{align}

So, if we define our model's $L^2$ loss as follows:

\begin{align}
\begin{aligned}
L_{f} = \frac 1{|\Omega|}\int_{\Omega} (f(\xi) + \Delta u_\theta(\xi))^2 \text d \xi\\
L_{b} = \frac 1{|\partial\Omega|} \int _{\partial\Omega} (g(\xi) - u_\theta(\xi))^2 \text d \xi
\end{aligned}
\end{align}

we can get the control of the error:
\begin{align}
\begin{aligned}
|v(x)| &\le \sqrt{L_{f} \cdot |\Omega|} \sqrt{\int _{\Omega}G^2(x,\xi) \text d \xi}\\& + \sqrt{L_{b} \cdot |\partial\Omega|} \sqrt{\int_{\partial\Omega}(\nabla_{\xi}G(x, \xi) \cdot \mathbf{n})^2 \text d \sigma}
\end{aligned}
\end{align}

Finally, we can get the following conclusion:

\begin{align}
\begin{aligned}
\|v\|_{L^1} \le C_1 \sqrt{L_{f}} + C_2\sqrt{L_{b}}
\end{aligned}
\end{align}

by defining $C_1 = \int _{\Omega} \sqrt{|\Omega|\int _{\Omega}G^2(x,\xi) \text d \xi} \text d x,~C_2 = \int _{\Omega} \sqrt{|\partial \Omega|\int_{\partial\Omega}(\nabla_{\xi}G(x, \xi) \cdot \mathbf{n})^2} \text d x$

\end{proof}


\section{Details of Experiments}
\label{app:exp}

We run the experiments based on DeepXDE 1.6.1 \cite{lu2021deepxde} with Pytorch 1.9 \cite{paszke2017automatic} backend. We use the default hyper-parameters for all the methods. The code will be released at \url{https://github.com/i207M/MultiAdam}

We strictly control the non-experimental variables of tests to be the same. In all examples, we use a five-layer feed-forward network of width 100 as the base model. The training dataset contains 10000 random points sampled from the domain and 1000 from boundaries. 

The accuracy of the methods is measured by mean absolute error (MAE) and relative $L^2$ error, which is explained in Appendix \ref{app:measure}. To reduce randomness, we repeat every setup 5 times with the Glorot normal initializer \cite{glorot2010understanding} and provide their average.

\subsection{Measurement}
\label{app:measure}

The metrics we use are mean absolute error and relative $L^2$ error as follows:

\begin{equation}
    \text{relative}~L_2~\text{error}=\frac{\sqrt{\sum_{i=1}^N|\hat u(x_i,t_i)-u(x_i,t_i)|^2}}{\sqrt{\sum_{i=1}^N|u(x_i,t_i)|^2}}
\end{equation}

where $u$ is the exact solution and $\hat u$ is the trained approximation. In cases that $u$ cannot be analytically represented, we utilize the finite element method to obtain high-precision numerical reference.

\subsection{Poisson's equation}

We use $\tanh$ as the activation function and train the five-layer network for 15000 epochs. 

\subsection{Helmholtz equation}
\label{app:hel}

On training hyper-parameters, the activation function of the model is $\sin$ and the number of training epochs is 20000.

\begin{figure*}[ht]
\vskip 0.2in
\begin{center}
\centerline{\includegraphics[width=0.9\columnwidth]{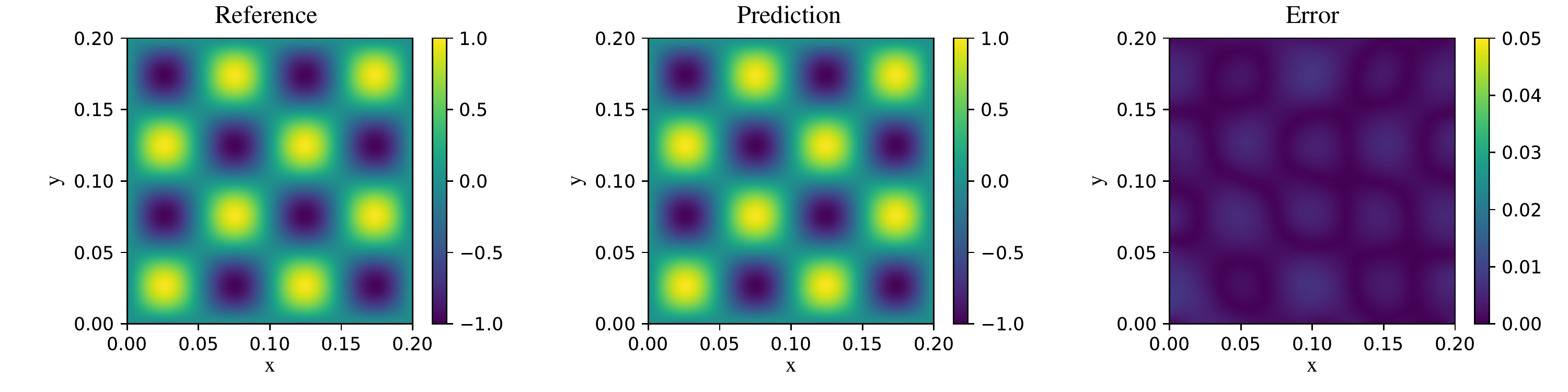}}
\caption{The predicted solution versus the exact solution of Helmholtz Equation by training a five-layer neural network using MultiAdam after 15000 iterations.}
\label{hel-heatmap}
\end{center}
\vskip -0.2in
\end{figure*}

\subsection{Burgers' equation}
\label{app:burgers}

Regarding network settings, $\tanh$ is set as the activation function and we iterate the optimization 20000 times. 

\section{Ablation study on $\beta_1, \beta_2$ hyper-parameters}
\label{ablation}

Here we use examples to demonstrate our interesting findings on the performance impact of betas, often-ignored hyper-parameters of the Adam optimizer. Adam's $\beta_1, \beta_2$ are $(0.9,0.999)$ by default, which may not be optimal for MultiAdam. We compared different settings of betas to illustrate the effect of first-order and second-order momentum estimation in our method. Results are listed in Table \ref{ablation-betas}.

We found that $(0.99,0.99)$ achieves the best convergence, which holds true among other PDE systems after substantial experiments. We argue that the scale invariant ability is related to the equality of $\beta_1$ and $\beta_2$. The equality implies that the optimizer tracks the same period of history of first-order and second-order gradient momentums, so by dividing one by another the scaling factor is eliminated. Hence the Adam's default $(0.9,0.999)$ works badly while $(0.9,0.9)$ is OK. We also tested removing first-order or second-order momentum, which performed so poorly that even encountered numerical instability. Therefore the momentum does help with optimization.

\begin{table}[t]
\caption{Mean absolute error and relative $L^2$ error of different $\beta_1, \beta_2$ settings on Poisson's equation. $(0.9,0)$ runs into NaN multiple times.}
\label{ablation-betas}
\vskip 0.15in
\begin{center}
\begin{small}
\begin{tabular}{lllll}
\hline
\multirow{2}{*}{$\beta_1, \beta_2$} & \multicolumn{2}{c}{Poisson-8} & \multicolumn{2}{c}{Poisson-1} \\
                   & Absolute   & Relative         & Absolute   & Relative         \\ \hline
0.99,0.99          & \textbf{1.10E-02}   & \textbf{2.94\%}           & \textbf{1.44E-02}   & \textbf{4.49\%}  \\
0.9,0.999          & 2.98E-02   & 8.44\%           & 3.04E-01   & 70.19\%          \\
0.9,0.9            & 2.54E-02   & 7.78\%           & 8.25E-02   & 21.09\%          \\
0.9,0            & N/A   & N/A           & N/A   & N/A          \\
0,0.9             & 8.76E-02   & 22.33\%          & 2.92E-01   & 68.31\%          \\ \hline
\end{tabular}
\end{small}
\end{center}
\vskip -0.1in
\end{table}

\end{document}